\documentclass{article}

\usepackage[preprint]{neurips_2026}

\usepackage[utf8]{inputenc} 
\usepackage[T1]{fontenc}    
\usepackage{hyperref}       
\usepackage{url}            
\usepackage{booktabs}       
\usepackage{amsfonts}       
\usepackage{nicefrac}       
\usepackage{microtype}      
\usepackage{xcolor}         

\usepackage{xcolor}
\usepackage{amssymb}
\usepackage{arydshln}
\newcommand{\name}{\textbf{DiScGraph}}

\usepackage{pifont}
\usepackage{wrapfig}
\usepackage{makecell}
\usepackage{amssymb}
\usepackage{comment}
\usepackage{bm}
\usepackage{tablefootnote}
\usepackage{tabularx}
\usepackage{etoc}
\usepackage{float}

\usepackage{amsmath,amsfonts,bm}

\def\eqref#1{equation~\ref{#1}}

\def\1{\bm{1}}

\DeclareMathAlphabet{\mathsfit}{\encodingdefault}{\sfdefault}{m}{sl}
\SetMathAlphabet{\mathsfit}{bold}{\encodingdefault}{\sfdefault}{bx}{n}

\usepackage{url}
\usepackage{booktabs} 
\usepackage{graphicx}
\usepackage{adjustbox}
\usepackage{multirow}
\usepackage{amsthm}
\usepackage{enumitem}
\usepackage{soul}
\usepackage[ruled,vlined,algo2e]{algorithm2e}
\usepackage{tcolorbox}
\usepackage{wrapfig}
\usepackage{caption}
\newtheorem{theorem}{Theorem}[section]

\newtheorem{lemma}[theorem]{Lemma}

\newcommand*\circled[1]{\tikz[baseline=(char.base)]{%
    \node[shape=circle,fill=black,draw,inner sep=1pt,text=white] (char) {#1};}}

\tcbset{
  myhl/.style={
    colback=orange!20,
    colframe=orange!20,
    arc=3pt,        
    boxsep=1pt,     
    left=2pt, right=2pt, top=0pt, bottom=0pt,
    boxrule=0pt     
  }
}

\newtcbox{\myhl}{on line, myhl}

\tcbset{
  mehl/.style={
    colback=gray!20,
    colframe=gray!20,
    arc=3pt,        
    boxsep=1pt,     
    left=2pt, right=2pt, top=0pt, bottom=0pt,
    boxrule=0pt     
  }
}

\newtcbox{\mehl}{on line, mehl}

\usepackage[table]{xcolor}

\definecolor{lightgreen}{RGB}{220,245,220}
\definecolor{lightblue}{RGB}{220,235,255}

\newcommand{\best}[1]{\cellcolor{lightgreen}{#1}}
\newcommand{\second}[1]{\cellcolor{lightblue}{#1}}

\title{Faithful Image Generation via Discrete Diffusion \\ on Scene Graphs}
\title{Generating Scene Graphs via Discrete Diffusion}
\title{Dependency-Aware Discrete Diffusion for \\ Scene Graph Generation}

\author{%
  Rajalaxmi Rajagopalan\thanks{Corresponding author, <rr30@illinois.edu>} \\
  University of Illinois, Urbana-Champaign
\And
  Romit Roy Choudhury \\
  University of Illinois, Urbana-Champaign
}

\begin{document}

\maketitle

\begin{abstract}
Scene graphs (SGs) represent objects and their relationships as structured graphs, enabling applications in image generation, robotics, and 3D understanding. 
Recent work suggests that conditioning image generation on scene graphs improves compositional fidelity compared to text-only prompting. 
However, since users typically provide text rather than structured graphs, a key challenge is to generate scene graphs from natural language.
Prior work on discrete diffusion has demonstrated success in generating generic graphs such as molecules and circuits, but fails to account for the hierarchical structure and strong dependencies between objects, edges, and relations in scene graphs.
We address this limitation by introducing a dependency-aware, hierarchically constrained discrete diffusion model for scene graph generation. 
Our approach decouples structure and semantics across the forward and reverse processes, enabling the model to capture conditional dependencies. 
At inference time, we perform training-free conditioning to sample text-aligned scene graphs.
We evaluate our method on standard SG benchmarks and demonstrate improvements over both continuous and discrete graph generation baselines across graph and layout metrics. 
When fed to downstream image generation, our approach yields improved compositional alignment compared to text-to-image models, particularly in multi-object scenarios.
\end{abstract}

\section{Introduction}
Text-to-image (T2I) models have made remarkable progress in generating images from user-specified prompts, but user expectations are rising just as quickly. 
As prompts become longer and more complex, multiple lines of attack are being explored to improve compositional fidelity and alignment.
\begin{wrapfigure}{r}{0.47\textwidth}
    \centering
    \vspace{-0.15in}
    \includegraphics[width=0.98\linewidth]{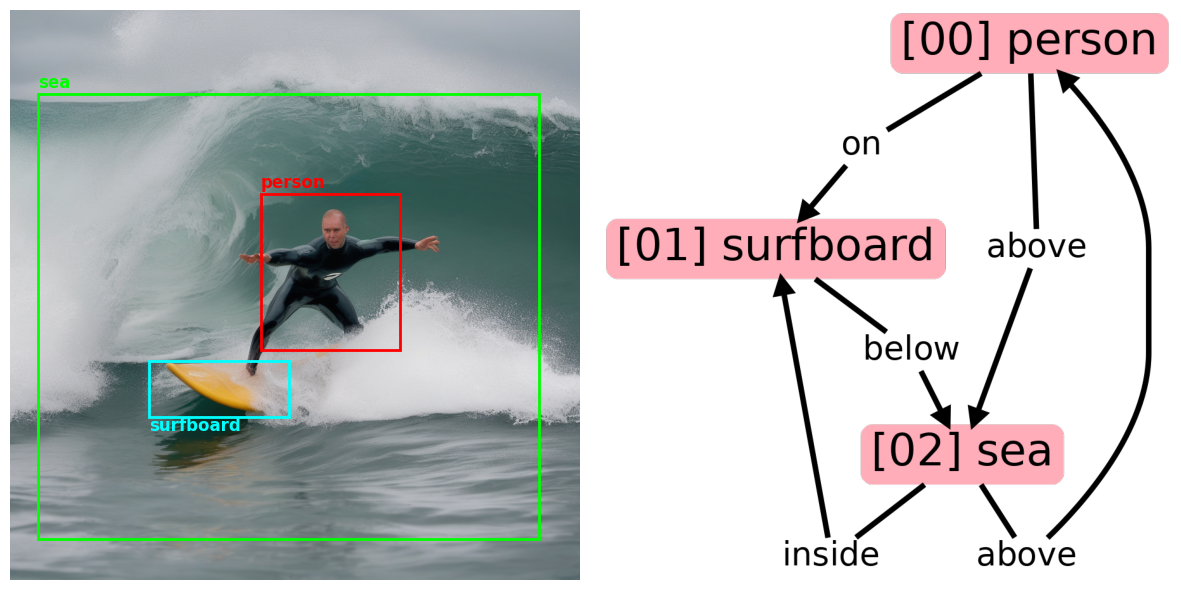}
    \caption{\small{Image $I$ and its scene graph $G$. Objects are pink nodes and relations are labels on edges.}}
    \vspace{-0.1in}
    \label{fig:sg_ex}
\end{wrapfigure}
One promising direction is to generate images from scene graphs (SGs)---see Fig. \ref{fig:sg_ex}---where visual objects are represented as nodes and their relationships as edges. 
Recent papers \cite{xu2024joint, farshad2023scenegenie, autoreg_sg2025, shen2025sgadapter} suggest that SG-conditioned image generation $p(I|G)$ can be more effective, particularly in compositional settings. 
While these gains are largely empirical, they are intuitively justified: text-based conditioning lacks explicit relational constraints, often leading to compositional inconsistencies. 
In contrast, scene graphs provide an explicit factorization over objects and relations, offering a structured prior that is difficult to encode in text embeddings.

Given this momentum, a natural question arises: how can scene graphs (SG) be generated from text prompts? 
While text-to-SG mapping may appear deterministic, it is inherently multi-modal due to the ambiguity and under-specification of natural language.
For example, a prompt such as “a person and a surfboard” does not specify their relationship; the person could be on, beside, or holding the surfboard.
Multiple valid SGs may correspond to the same prompt, necessitating modeling a conditional distribution $p(G|T)$ rather than predicting a single deterministic output.

Motivated by this, we propose to learn the manifold of scene graphs using discrete diffusion models, and then perform conditional generation of SGs from text prompts. 
However, this presents a fundamental challenge: scene graphs are significantly more complex than generic graphs.
They exhibit long-tailed vocabularies, strong conditional dependencies between objects and relations, and global structural constraints. 
Injecting these properties into discrete diffusion remains underexplored.

A key limitation of existing discrete diffusion approaches\cite{vignac2023digress}---designed for generic graphs like molecules or traffic networks---is that they rely on parameterizations that treat nodes and edges as independent categorical variables in both the forward and reverse processes. 
With scene graphs, such independence assumptions break due to the inherent conditional dependencies, such as the hierarchical structure where objects determine edges and edges determine relations.
Our key insight is that scene graphs require a \emph{hierarchically constrained discrete diffusion process}, where structure and semantics are modeled with explicit conditional dependencies rather than independent categorical transitions. 

We instantiate this insight by designing \circled{1} a hierarchical graph representation $G(V,E,R^{+})$ in which relations are defined only on active edges (i.e., $e_{ij} = 0\implies r_{ij} = 0$), 
\circled{2} an edge-gated forward noising process that respects this structure, and \circled{3} a factorized reverse sampler that generates objects, edges, and relations in a dependency-aware sequence: 
\begin{displaymath}
p_\theta(\hat{V_0},\hat{E_0},\hat{R_0}\mid x_t)
= p_\theta(\hat{V_0}\mid x_t) p_\theta(\hat{E_0}\mid \hat{V_0},x_t) p_\theta(\hat{R_0} \mid \hat{E_0},\hat{V_0},x_t) 
\end{displaymath}
Finally, \circled{4} at inference time, we perform training-free conditioning via reward-tilted sampling, enabling the generation of SGs aligned with the input text, $p(G|T)$.
We employ particle resampling (Sequential Monte Carlo): \( p_\theta(G_{t-1}| G_t;T) \propto p_\theta(G_{t-1}| G_t) \exp(\beta R(\hat{G}_0, T))\), with CLIP's similarity metric \cite{radford2021learning} to compute the reward $R(\hat{G}_0, T)$.
In sum, our main contribution lies in replacing standard independent categorical diffusion with a dependency-aware hierarchical diffusion process tailored to the unique requirements of scene graphs.
Fig. \ref{fig:overview} illustrates our operation pipeline.

We implement our model {\name} using a Graph Transformer \cite{dwivedi2021generalizationtransformernetworksgraphs} as the denoising network and train on standard scene graph datasets including Visual Genome \cite{krishna2016visualgenomeconnectinglanguage}, COCO\cite{lin2015microsoftcococommonobjects}, and LAION-SG \cite{li2024laionsgenhancedlargescaledataset}. 
We evaluate our approach on (i) graph metrics such as entity-MMD, Triplet-TV, and Rare-K TV, (ii) grounding metrics based on layout IoU, and (iii) image generation metrics including ImageReward, BLIP-VQA, and SG-IoU. 
Baselines include graph generation (DiGress \cite{vignac2023digress}, DiffuseSG \cite{xu2024joint}, etc.), layout models (LayoutDM \cite{lian2024llmgroundeddiffusionenhancingprompt}, LLMLayout-T2I \cite{layoutllm}), and text-based compositional image generators (CO3 \cite{dutta2026steerawaymodecollisions} and ComposeDiffusion \cite{compose-diff}) on COCO-validation and CompSGBench \cite{li2024laionsgenhancedlargescaledataset} benchmarks. 
Across these benchmarks, our method outperforms both continuous SG generation models and prior discrete graph baselines while achieving competitive performance on layout prediction.
Finally, when the generated scene graphs are used to condition downstream image generation models, we observe improved alignment compared to text-only T2I systems, particularly in compositional and multi-object scenarios.

\begin{figure*}
    \centering
    \includegraphics[width=1.05\linewidth]{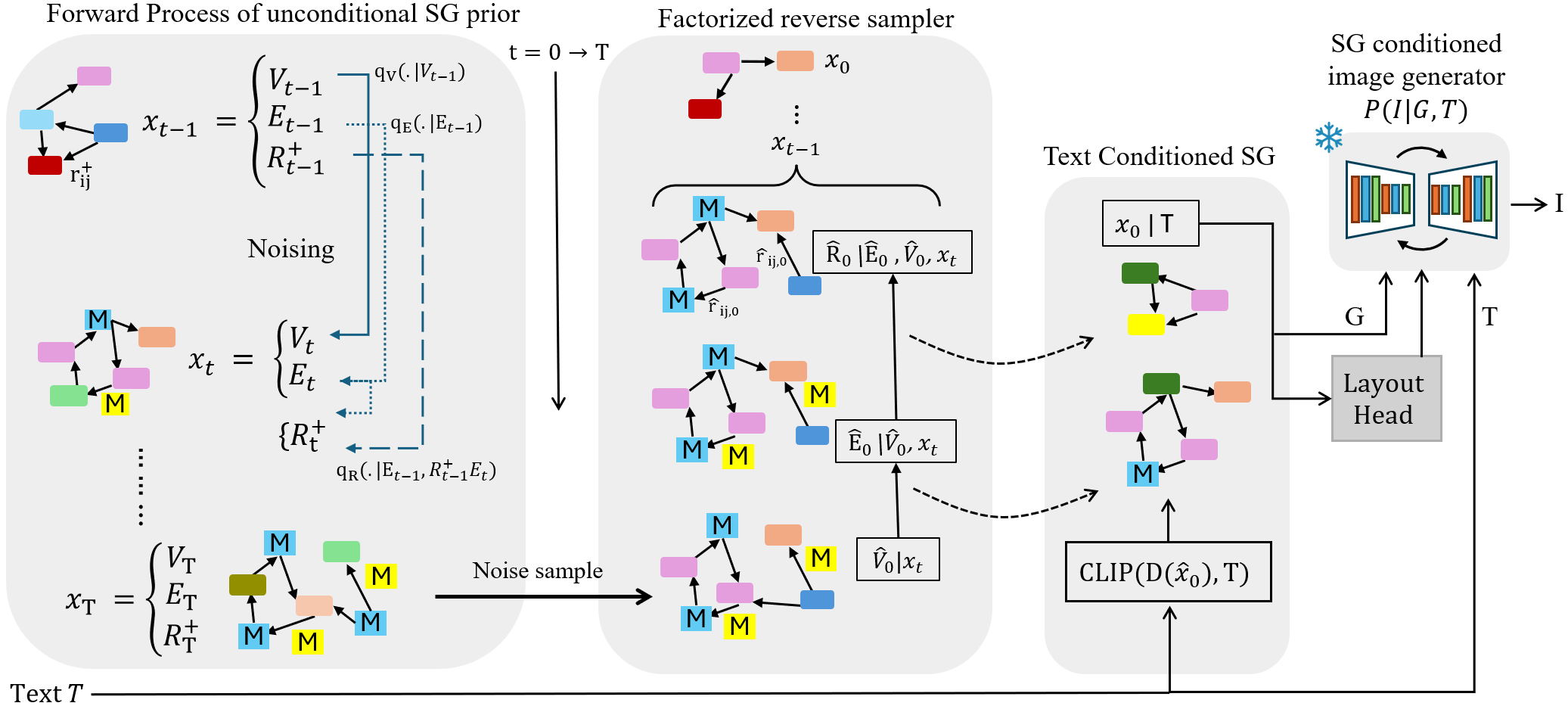}
    \caption{\small{{\name} pipeline: (1) Forward noising process in which relations are edge-gated. (2) Factorized reverse sampling that generates dependency-aware objects, edges and relations. (3) Reward-tilting at inference using the CLIP similarity with the text prompt as reward; a layout head also generates bounding boxes for graph nodes. (4) SG and text conditioned off-the-shelf image generator.}}    
    \vspace{-0.15in}
    \label{fig:overview}
\end{figure*}

\section{Preliminaries}
\vspace{-0.1in}
\noindent\textbf{Discrete Diffusion.}
Continuous diffusion \cite{song2021score,pmlr-v37-sohl-dickstein15}, typically using a Gaussian noising process, is not well-suited for categorical or structured data such as graphs. This motivates diffusion directly in discrete state spaces.
In discrete diffusion \cite{austin2023structureddenoisingdiffusionmodels}, each entity takes values in a finite set of $K$ categories. The forward process is defined via transition matrices $\{Q_t\}_{t=1}^T$, where, $q(x_t \mid x_{t-1}) = x_{t-1} Q_t,$
and the marginal at time $t$ is given by, $q(x_t \mid x_0) = x_0 \bar{Q}_t, \quad \bar{Q}_t = \prod_{s=1}^t Q_s.$
A common choice of $Q_t$ yields a uniform stationary distribution. The posterior used for training admits a closed form,
$q(x_{t-1} \mid x_t, x_0) \propto \left(x_t Q_t^\top\right) \odot \left(x_0 \bar{Q}_{t-1}\right).$

\noindent\textbf{Discrete Diffusion on Graphs.}
DiGress~\cite{vignac2023digress} extends discrete diffusion to graphs by modeling node and edge attributes as independent categorical variables. A graph is represented as $G = (V, E)$, and the forward process is defined by, $q(G_t \mid G_{t-1}) = \big(V_{t-1} Q^V_t,\; E_{t-1} Q^E_t\big),$
with closed-form marginals, $q(G_t \mid G_0) = \big(V_0 \bar{Q}^V_t,\; E_0 \bar{Q}^E_t\big).$
The reverse model predicts clean node and edge distributions, and sampling proceeds independently across nodes and edges,
\begin{equation}
p_\theta(G_{t-1} \mid G_t)
= \prod_{i} p_\theta(v^i_{t-1} \mid G_t)
  \prod_{i,j} p_\theta(e^{ij}_{t-1} \mid G_t).
\end{equation}
Each term is computed by marginalizing over predicted clean states (nodes \& edges),
\begin{equation}
p_\theta(v^i_{t-1} \mid G_t)
= \sum_{v_0} q(v^i_{t-1} \mid v_0, x^i_t)\, \hat{p}^i_V(v_0), \qquad
p_\theta(v^i_{t-1} \mid v_0, G_t) =
\begin{cases}
q(v^i_{t-1} \mid v_0, x^i_t), & \text{if } q(x^i_t \mid v_0) > 0, \\
0, & \text{otherwise}.
\end{cases}
\end{equation}
This formulation enforces permutation invariance, and treats nodes and edges as independent categorical variables. However, it assumes a categorical space where edge existence and semantics, edge identity are coupled, which is unsuitable for scene graphs' strong inter-entity dependence, large relation vocabularies, and long-tailed relation distribution.



\section{Method}
\vspace{-0.05in}
{\name}'s redesign of discrete diffusion entails $4$ parts, starting with a (1) hierarchical representation of SGs, (2) an edge-gated forward noising process (3) a factorized reverse sampler for unconditional generation, and (4) a text-conditioned SG generation at inference time (see Fig. \ref{fig:overview}).
\vspace{-0.05in}

\subsection{Hierarchical Scene Graph Representation}
\vspace{-0.1in}
In discrete diffusion, a graph is represented as a flat categorical state $G=(V,E)$, where node labels $V$ and edge labels $E$ are independently modeled \cite{vignac2023digress}. 
While this representation is effective for, say, molecules or transportation graphs (where edge relations are defined over a small, well-defined vocabulary with strict constraints), SGs involve large, long-tailed relational vocabularies and strong object–relation dependencies.
Modeling the edges with a single categorical variable, say $e_{ij} \in \{noEdge, r_1, r_2, \dots, r_K \}$, biases the learning towards edges rather than relations, especially for sparse scene graphs, large $K$, and the long-tailed distribution over relation classes $r_{k}$. 
Scene graphs are better modeled by separating edges (structure) from the relations (semantics), so that \emph{relations can be conditioned only on active edges}, i.e., $e_{ij}=0 \Rightarrow r_{ij}=0$.
This motivates our decoupled representation in which each edge is factorized into two: (i) edge existence, $e_{ij} \in \{0,1\}$, and (ii) relation identity, $r^+_{ij} \in \{1,\dots,K_{rel}\}$.
We define valid scene graph space as:
\begin{equation}
\mathcal{G}_{SG} = \{(V,E,R^+) : e_{ij}=0 \Rightarrow r_{ij}=0\}, \quad ~~
\big(V,E,R^+ \big) = \big( (v_{i}), (e_{ij}), (r^+_{ij}) \big)_{i,j=1}^N
\end{equation}
\begin{align}
v_i \in &\mathcal{V} = \{1,\dots,K_{\text{obj}}\}, \quad 
e_{ij} \in \{0,1\}, \quad 
r^+_{ij} \in \{1,\dots,K_{\text{rel}}\}.
\end{align}

Thus the abstract graph state space $\mathcal{G} := \mathcal{V}^N \times \{0,\dots,K_{rel}\}^{N\times N}$ is modified to the constrained scene graph space, $\mathcal{SG} := \mathcal{V}^N \times \{0,1\}^{N\times N} \times \{1,\dots,K_{rel}\}^{N\times N}$.
The factorized state explicitly encodes:
(i) directedness,
(ii) sparsity via $E$,
(iii) semantic validity via $R^+$.
This representation underlies all subsequent modeling components.

\subsection{Structure-Aware Forward Process}

We now define a discrete diffusion process over the factorized state as,
\begin{equation}
q(x_{1:T} \mid x_0) = \prod_{t=1}^T q(x_t \mid x_{t-1}), \quad x_t = (V_t, E_t, R_t^+).
\end{equation}
The forward corruption is redesigned for the factorized transition over the scene graph space $\mathcal{SG}$ as defined below, where $V$ and $E$ corruptions follow standard discrete diffusion forward process \cite{austin2023structureddenoisingdiffusionmodels, vignac2023digress}:
\begin{equation}
q(x_t \mid x_{t-1}) =
q_V(V_t \mid V_{t-1})\,
q_E(E_t \mid E_{t-1})\,
q_R(R_t^+ \mid R_{t-1}^+, E_t, E_{t-1}).
\end{equation} 
\textbf{Relation corruption (edge-aware).}
Relations are only defined on active edges, so corruption must depend on edge state, as follows,
\begin{equation}
q_R(R_t \mid R_{t-1}, E_t)
=
\prod_{i \neq j}
q_R(r_{ij,t} \mid r_{ij,t-1}, e_{ij,t}).
\end{equation}
\vspace{-0.08in}
\begin{equation}
q_R(r_{ij,t} \mid r_{ij,t-1}, e_{ij,t}) =
\begin{cases}
\delta_0(r_{ij,t}), & e_{ij,t} = 0, \\
\bar{q}_R(r_{ij,t} \mid r_{ij,t-1}), & e_{ij,t} = 1,
\end{cases}
\end{equation}
\vspace{-0.05in}
For active edges ($e_{ij}=1$) with relation prior $\pi_R$\footnote{We slightly abuse notations and use $R_t$ and $R_t^+$ interchangeably, but it implies relations on active edges.},
\begin{equation}
\Tilde{q}_R(r_{ij,t} \mid r_{ij,t-1})
=
(1-\beta_t^R)\delta_{r_{ij,t-1}}
+
\beta_t^R \pi_R,
\end{equation}
\begin{equation}
q_R(R_t \mid R_{t-1}, E_t)
=
\prod_{ij:e_{ij,t}=1} \Tilde{q}_R(r_{ij,t} \mid r_{ij,t-1})
\prod_{ij:e_{ij,t}=0} \delta_0(r_{ij,t}).
\end{equation}
\vspace{-0.05in}
Hence, the combined corruption process is:
\small{
\begin{equation}
\begin{aligned}
q(x_t \mid x_{t-1})
&=
\prod_i q_V(v_{i,t} \mid v_{i,t-1})
\prod_{i\neq j} q_E(e_{ij,t} \mid e_{ij,t-1})
\prod_{ij:e_{ij,t}=1} \tilde{q}_R(r_{ij,t} \mid r_{ij,t-1})
\prod_{ij:e_{ij,t}=0} \delta_0(r_{ij,t})
\end{aligned}
\end{equation}
}
 We derive the stationary distribution for the factorized discrete state in the Appendix Sec. \ref{app:fwd_conv} and provide closed-form expressions for the marginals $q(x_t|x_0)$ in the Appendix Sec. \ref{app:fwd_marginal}. 
In abstract graph works, the categorical edge prior is $\pi_E = \{\pi_E(0),\pi_E(1),\dots,\pi_E(K_{rel})\}$, but due to sparsity, large $K_{rel}$, and long-tailed relation distribution in scene graphs, $\pi_E(0) \gg \pi_E(r), r \in \{1,\dots,K_{rel}\}$. 
Thus, without factorization, under random corruption, many relations are pushed towards "no-edge". This implies that rare relations compete against both frequent relations and the overwhelmingly frequent no-edge class. In contrast, the factorized corruption happens over $\{1,\dots,K_{rel}\}$ instead of $\{0,1,\dots,K_{rel}\}$. Thus, rare relations no longer compete with edge absence during relation denoising, resulting in better relation modeling.

In addition to random corruption, we also apply absorbing mask corruption on $V$ and $R^+$, which preserves the SG structure and only hides the semantics information, allowing the model to learn SG semantics. 
For mask tokens $[\mathrm{MASK}]_V$ and $[\mathrm{MASK}]_R$, we employ a hybrid random + mask corruption strategy. 
The hybrid corruption kernel for objects is as follows (with more details in the Appendix Sec. \ref{app:fwd_mask}):
\begin{equation}
q_V(v_{i,t} \mid v_{i,t-1})
=
(1-\beta_t^V)\delta_{v_{i,t-1}}
+
\beta_t^V\big[
\rho_V \delta_{[\mathrm{MASK}]_V}
+
(1-\rho_V)\pi_V
\big].
\end{equation}

\subsection{Factorized Reverse Sampler}
\label{sec:rev_sampler}

Given the constrained scene graph state at time $t$, $\{x_t = (V_t,E_t,R_t^+),~ e_{ij,t}=0 \Rightarrow r_{ij,t}=0\}$, 
we need to sample $x_{t-1} \sim p_\theta(x_{t-1}\mid x_t)$ 
while preserving this constraint. As in discrete diffusion, we parameterize the reverse process through a denoiser $\epsilon_{\theta}$ prediction of the clean state \(x_0\), and combine this prediction with the known forward kernels to obtain a posterior over \(x_{t-1}\).
For scene graphs, the clean state is $x_0=(V_0,E_0,R_0).$
Following the structured state space, we factorize the clean-state prediction as
\begin{equation} \begin{aligned}
p_\theta(x_0\mid x_t)
=
p_\theta(V_0,E_0,R_0\mid x_t) 
=
p_\theta(V_0\mid x_t)\,
p_\theta(E_0\mid V_0,x_t)\,
p_\theta(R_0\mid V_0,E_0,x_t).
\end{aligned} 
\label{eqn:rev_sampler}
\end{equation} 
This factorization follows the semantic structure of scene graphs, where:
(1) first, object identities define the semantic entities in the graph, then
(2) directed edge existence determines which object pairs interact, and then 
(3) relation identities depend on objects and are meaningful only for active directed edges.

The combined object, edge, and relation posterior updates give the structured reverse transition is,
\begin{equation} \begin{aligned}
p_\theta(x_{t-1}\mid x_t)
&=
p_\theta(V_{t-1}\mid x_t)\,
p_\theta(E_{t-1}\mid V_{t-1},x_t)\,
p_\theta(R_{t-1}\mid V_{t-1},E_{t-1},x_t).\\
&=
\prod_i p_\theta(v_{i,t-1}\mid v_{i,t},x_t)
\prod_{i\neq j}p_\theta(e_{ij,t-1}\mid e_{ij,t},V_{t-1},x_t)
\prod_{i\neq j}p_\theta(r_{ij,t-1}\mid r_{ij,t},e_{ij,t-1},V_{t-1},E_{t-1},x_t).
\end{aligned} \end{equation} 
{\name} redesigns the relation posterior to be compatible with the factorized state space $\mathcal{SG}$. This opens up three scenarios: (1) $e_{t-1}$ does not exist, (2) both $e_{t-1}$ and $e_t$ exist, (3) $e_{t-1}$ has come to exist,
\begin{equation} \begin{aligned}
p_\theta(r_{ij,t-1}\mid r_{ij,t},e_{ij,t-1},e_{ij,t},x_t)
=
\begin{cases}
\delta_0(r_{ij,t-1}),
& e_{ij,t-1}=0, \\[6pt]
p_\theta^{\mathrm{post}}(r_{ij,t-1}\mid r_{ij,t},x_t),
& e_{ij,t-1}=1,\ e_{ij,t}=1, \\[6pt]
p_\theta^{\mathrm{marg}}(r_{ij,t-1}\mid x_t),
& e_{ij,t-1}=1,\ e_{ij,t}=0.
\end{cases}
\end{aligned} \end{equation}
where $p_\theta^{\mathrm{post}}(r_{ij,t-1}\mid r_{ij,t},x_t)$ denotes the standard relation posterior distribution and $p_\theta^{\mathrm{marg}}(r_{ij,t-1}\mid x_t)$ denotes the marginal distribution obtained by moving the denoiser's clean prediction to time $t-1$. Please refer to Appendix Sec. \ref{app:rev_sample} for the detailed derivation of the posterior.
Thus, the factorized reverse sampler is the reverse-time counterpart of the structure-aware forward process. It uses the known discrete posterior for categorical diffusion, but modifies appropriately with \emph{edge-gating}, needed for our constrained scene graph state. 

In addition to the base sampler, we also explore three types of sampler refinements for robust sampling, such as (1) Split Gibbs refinement \cite{Vono_2019,chu2026split}, (2) Soft masked refinement, and (3) Rare relation refinement for improving semantic consistency, correcting uncertain predictions, and improving rare relation generation, respectively. 
Appendix Sec. \ref{app:sampler_refinements} provides details on each strategy and their schedules.

\subsection{Text-Conditioned Scene Graph Generation via Reward-Tilted Inference}
\label{app:text_tilting}
The structured diffusion model described above defines an unconditional generative prior over scene graphs, $p_\theta(G_0)$, $G_0$ is the clean sampler output. We aim to generate scene graphs conditioned on a text prompt \(T\), i.e., $P(G_0 \mid T).$
Instead of training a conditional model \(p_\theta(G_0 \mid T)\), we perform \emph{training-free alignment} by sampling from a reward-tilted distribution, following works like \cite{kim2025test,yeh2024training},
\begin{equation} \begin{aligned}
\tilde p(G_0 \mid T)
\propto
p_\theta(G_0)\,\exp\big(\beta R(G_0, T)\big),
\end{aligned} \label{eqn:reward-tilt}\end{equation}
where \(R(G_0,T)\) is a CLIP \cite{radford2021learning} similarity score between the scene graph and the text prompt, and \(\beta > 0\) controls the strength of conditioning (Appendix Sec. \ref{app:clip} gives more information on the reward).
The reward depends only on the final state \(G_0\), while sampling proceeds sequentially from \(t=T\) to \(t=0\). So we can define intermediate timestep reward approximations $R_t := R(\hat G_0^{(t)}, T)$ using the denoiser's $\epsilon_\theta$ predictions of the clean graph at $t$, $\hat G_0^{(t)} = \epsilon_\theta(G_t, t)$.\\
\textbf{Tilted Sampling.}
Ideally, we would sample from the tilted distribution,
\[
\tilde p_\theta(G_{t-1} \mid G_t, T)
\propto
p_\theta(G_{t-1} \mid G_t)\,
\exp\big(\beta R(\hat G_0(G_{t-1}), T)\big),
\]
but this distribution is intractable, so we use Sequential Monte Carlo sampling (See \cite{kim2025test} and Appendix Sec. \ref{app:smc}).
Thus, by leveraging the intermediate clean-state predictions \(\hat G_0\mid G_t\), we can guide the reverse diffusion trajectory toward samples that are aligned with the input text, while generating valid scene graphs.

\section{Experiments}
\vspace{-0.1in}

$\blacksquare$ \textbf{Datasets}:
We train the scene graph generator on popular scene graph datasets:\myhl{Visual Genome (VG)}\cite{krishna2016visualgenomeconnectinglanguage}, \myhl{COCO-Stuff}\cite{lin2015microsoftcococommonobjects}, and a large-scale dataset \myhl{LAION-SG}\cite{li2024laionsgenhancedlargescaledataset}. 
Both quantitative and qualitative results reported are from evaluations on \myhl{VG/COCO-validation} and \myhl{CompSGBench}\cite{li2024laionsgenhancedlargescaledataset} benchmark, consisting of SGs describing complex multi-object-relation visual scenes. 

\vspace{-0.02in}
$\blacksquare$ \textbf{Image Generators}: For T2I generation task, we use SG-to-image generator \myhl{SDXL-SG} \cite{li2024laionsgenhancedlargescaledataset}, and Layout-to-image generator, \myhl{GLIGEN}\cite{li2023gligenopensetgroundedtexttoimage}.

\vspace{-0.02in}
$\blacksquare$ \textbf{Baselines}:\mehl{(1) \emph{Graph Generators}:}We compare generated unconditional SGs against generic diffusion graph generators trained on SG datasets: \myhl{DiGress}\cite{vignac2023digress}, \myhl{GraphGDP}\cite{huang2022graphgdp}, \myhl{D3PM}\cite{austin2023structureddenoisingdiffusionmodels}, autoregressive SG generators: \myhl{SceneGraphGen}\cite{garg2021unconditional} and \myhl{VarScene}\cite{verma2022varscene}, and diffusion SG work: \myhl{DiffuseSG}\cite{xu2024joint}.
\mehl{(2) \emph{Layout Generators}:} We compare layouts extracted from {\name}'s layout head against Transformer model:\myhl{BLT}\cite{kong2022blt}, discrete diffusion works:\myhl{LayoutDiffusion}\cite{zhou2023layoutdiffusion}, \myhl{LayoutDM}\cite{inoue2023layoutdm} that generate discretized layouts trained on layouts from SG datasets, and \myhl{DiffuseSG}\cite{xu2024joint} that generates layouts along with scene graphs. 
\mehl{(3) \emph{T2I Image Generation}:} We compare text-conditioned SG to image generation performance against text-only compositional works:\myhl{ComposedDiff}\cite{compose-diff}, \myhl{CO3}\cite{dutta2026steerawaymodecollisions} on SDXL, and base \myhl{SDXL}\cite{sdxl}. We also compare layout-to-image generation with LLM augmented works: \myhl{LayoutLLM-T2I}\cite{layoutllm}, and \myhl{LDM}\cite{lian2024llmgroundeddiffusionenhancingprompt}.

\vspace{-0.02in}
$\blacksquare$ \textbf{Metrics}: We use 
\mehl{(1) \emph{Unconditional Graph Metrics}:} \myhl{N-MMD}, \myhl{R-MMD}, \myhl{ID-MMD}, \myhl{OD-MMD} \cite{xu2024joint} (for baselines, relations and edges are synonymous). 
A triplet is defined as <\texttt{object,relation,subject}>, and\myhl{Triplet-TV} measures if semantic consistency is preserved. \myhl{Attach-TV (AT-TV)} measures how accurately the model generates relations that "attach" to given objects. \myhl{RARE-K-TV (R-K-TV)} measures the model's capability to generate long-tailed relations.
\mehl{(2) \emph{Layout Metrics}:} \myhl{F1 scores} between generated and ground truth layouts.  
\mehl{(3) \emph{Image Quality Metrics}:} \myhl{FID}, \myhl{CLIP-I2I}, \myhl{CLIP-I2T} \cite{radford2021learning}, \myhl{BLIP-VQA} \cite{t2icompbench}, and \myhl{ImageReward (IR)} \cite{xu2023imagereward}. 
\mehl{(4) Scene-graph-to-image metrics:} \myhl{SG-IoU}, \myhl{Entity-IoU}, \myhl{Relation-IoU} \cite{shen2024sgadapterenhancingtexttoimagegeneration}, that measure structural overlap between generated and real images; quantified via IoU of SGs, objects, and relations extracted by an LLM from the image pairs. \\
$\blacksquare$ \textbf{Training Details}: 
We use the graph transformer \cite{yun2020graphtransformernetworks} backbone for the denoiser $\epsilon_\theta$, following DiGress \cite{vignac2023digress}. However, the embeddings and denoiser prediction heads are modified for the SG factorized formulation. Additionally, we also use a layout head that extracts layouts from sampled scene graphs. GIoU layout loss ensures a geometrically consistent scene graph generator. 
Please refer to Appendix Sec. \ref{app:experiment} for more details on metrics and implementation details.

\subsection{Results}
\vspace{-0.05in}
{\name}'s two main performance goals are: (1) scene graph generation from a learnt structural prior $P(G)$, and (2) improving text-to-image (T2I) generation with text-conditioned scene graphs. 

\circled{1} \textbf{Scene Graph Generation} \\
\emph{Unconditional scene graphs}:
Table \ref{tab:graph_metrics} reports results on $N=1000$ generated SGs. The reported distributional metrics, MMD and TV, analyze their statistics against ground truth. 
The corresponding qualitative scene graphs are visualized in Figures \ref{fig:uncond_vg}, \ref{fig:uncond_vg_1}, \ref{fig:uncond_coco}, \ref{fig:uncond_comp}. 
Results show that {\name} achieves superior performance across all graph metrics. 
Particularly, the factorized formulation boosts relation-learning significantly, as evidenced by the large gap between {\name} and the second-best baselines (DiffuseSG and DiGress) on Relation-MMD (R-MMD) and Attach-TV (AT-TV). 
R-MMD quantifies {\name}'s ability at learning true relation distribution, while AT-TV specifically measures semantic consistency via $p(R|V)$ i.e., given $V$, can a model predict right $R$. 

\vspace{-0.05in}

\begin{table*}[hbt!]
\centering
\caption{Comparing graph methods on sampled graph metrics on val/test datasets (\colorbox{lightgreen}{Best}, \colorbox{lightblue}{2nd best}).}

\setlength{\tabcolsep}{2.5pt}
\renewcommand{\arraystretch}{1.2}

\fontsize{8.5}{9.5}\selectfont
\begin{tabular}{|c|c|c|c|c|c|c|c|}
\hline
Method & N-MMD ($\downarrow$) & R-MMD ($\downarrow$) & ID-MMD ($\downarrow$) & OD-MMD ($\downarrow$) & TRIP-TV ($\downarrow$) & R-K-TV ($\downarrow$) & AT-TV ($\downarrow$)\\
\hline

\multicolumn{8}{|c|}{\textbf{Visual Genome}} \\
\hline

GraphGDP & $1.3e^{-2}$ & $3.40e^{-2}$ & $8.67e^{-2}$ & $9.35e^{-2}$ & 0.8980 & 0.999 & 0.5904 \\
D3PM\footnotemark & $7.69e^{-3}$ & $2.00e^{-2}$ & $3.07e^{-2}$ & $3.07e^{-2}$ & 0.8160 & - & - \\
DiGress & \second{$7.63e^{-3}$} & \second{$7.26e^{-3}$} & $9.27e^{-3}$ & $9.66e^{-3}$ & 0.7730 & 0.8622 & 0.4561 \\
\hline

SceneGraphGen & $1.28e^{-2}$ & $9.66e^{-3}$ & $1.54e^{-2}$ & $1.29e^{-2}$ & 0.9540 & 0.984 & 0.3264 \\
VarScene & $3.68e^{-2}$ & $2.55e^{-2}$ & $2.45e^{-2}$ & $2.51e^{-2}$ & 0.9880 & 0.997 & \second{0.2348} \\
DiffuseSG$_{\text{sg}}$ & $8.21e^{-3}$ & $1.11e^{-2}$ & \second{$7.05e^{-3}$} & \second{$6.01e^{-3}$} & \second{0.6390} & \second{0.826} & 0.2916 \\
{\name}$_{\text{sg}}$  & \best{$4.59e^{-3}$} & \best{$5.78e^{-3}$} & \best{$4.32e^{-3}$} & \best{$4.46e^{-3}$} & \best{0.4920} & \best{0.5194} & \best{0.1766}\\

\hline

\multicolumn{8}{|c|}{\textbf{COCO}} \\
\hline

GraphGDP & $1.55e^{-2}$ & $7.13e^{-3}$ & $5.00e^{-4}$ & $7.00e^{-4}$ & 0.7855 & 0.982 & 0.5388 \\
D3PM$^2$ & $4.92e^{-4}$ & $1.29e^{-4}$ & \best{$0$} & \best{$0$} & \second{0.305} & - & - \\
DiGress & $7.81e^{-4}$ & $8.86e^{-4}$ & $9.01e^{-6}$ & $1.95e^{-5}$ & 0.4147 & 0.855 & 0.3775 \\
\hline

SceneGraphGen & $6.06e^{-4}$ & $9.92e^{-5}$ & $9.00e^{-4}$ & $9.00e^{-4}$ & 0.9486 & 0.977 & \second{0.2822} \\
VarScene & $6.23e^{-2}$ & 0.104 & 0.1040 & 0.0994 & 0.8684 & 0.961 & 0.2902 \\
DiffuseSG$_{\text{sg}}$ & \best{$3.93e^{-4}$} & \second{$6.45e^{-5}$} & $8.47e^{-6}$ & $9.91e^{-6}$ & 0.3106 & \second{0.814} & 0.2916 \\
{\name}$_{\text{sg}}$  & \second{$3.97e^{-4}$} & \best{$5.39e^{-5}$} & \second{$6.37e^{-6}$} & \second{$5.76e^{-6}$} & \best{0.2100} & \best{0.6661} & \best{0.1204} \\

\hline

\multicolumn{8}{|c|}{\textbf{LAION-SG}} \\
\hline

DiGress & $3.42e^{-3}$ & $9.78e^{-3}$ & $7.35e^{-3}$ & $8.00e^{-3}$ & 0.8188 & 0.9164 & 0.5526 \\
\hline

DiffuseSG$_{\text{sg}}$ & \second{$1.36e^{-3}$} & \second{$9.66e^{-3}$} & \second{$6.50e^{-3}$} & \second{$6.90e^{-3}$} & \second{0.7025} & \second{0.8973} & \second{0.5440} \\
{\name}$_{\text{sg}}$  & \best{$9.81e^{-4}$} & \best{$6.35e^{-3}$} & \best{$5.90e^{-3}$} & \best{$5.80e^{-3}$} & \best{0.5693} & \best{0.7746} & \best{0.4238}\\

\hline
\end{tabular}

\label{tab:graph_metrics}
\end{table*}
\footnotetext{We are unable to replicate the results of D3PM reported in \cite{xu2024joint} so we report them as provided.}

\emph{Does {\name} learn rare relations when plausible object pairs are presented to it?}
Rare-K-TV evaluates this behavior and results show that by decoupling structure $E$ and semantics $R$, {\name} is uniquely suited for scene graphs' long-tailed relation distributions. 
Moreover, the In/Out Degree MMD metrics show that learning edge-existence structure is not compromised in favor of relations. 
The factorized sampler that predicts $V \to E \to R$ results in {\name}'s sampled graphs to adhere strongly to semantic consistency, thereby improving Triplet-TV (<\texttt{object,relation,subject}>). 
Thus, the factorized constrained discrete space and reverse sampler together ensure {\name} learns the language semantics (strong conditional dependencies) and the long-tailed vocabularies.
This makes {\name} a more robust SG generator over a continuous diffusion baseline (DiffuseSG), and over a generic graph discrete framework (DiGress).

\begin{table*}[t]
\centering

\setlength{\tabcolsep}{3pt}
\renewcommand{\arraystretch}{1.2}

\begin{minipage}[t]{0.45\textwidth}
\centering
\caption{\small{Comparing layout methods w/ layout head output (\colorbox{lightgreen}{Best}, \colorbox{lightblue}{2nd best}).}}
\fontsize{7}{8}\selectfont
\begin{tabular}{|c|c|c|c|c|}
\hline
\multicolumn{5}{|c|}{\textbf{Visual Genome}} \\
\hline
Method & F1-std ($\uparrow$) & F1-Area ($\uparrow$) & F1-Freq ($\uparrow$) & F1-Box ($\uparrow$) \\
\hline
BLT & 0.1673 & \second{0.3294} & 0.3716 & 0.7110 \\
LayoutDiff. & 0.1459 & 0.3095 & 0.3744 & \second{0.7716} \\
LayoutDM & 0.1496 & 0.2521 & \best{0.3951} & \best{0.7843} \\
\hline
DiffuseSG$_{\text{lay}}$ & \second{0.1753} & \second{0.3296} & \second{0.3939} & 0.7210 \\
{\name}$_{\text{lay}}$  & \best{0.1790} & \best{0.3302} & 0.3950 & 0.7540 \\
\hline
\multicolumn{5}{|c|}{\textbf{COCO}} \\
\hline
BLT & 0.4166 & 0.5056 & 0.6280 & 0.8000 \\
LayoutDiff. & 0.3586 & \second{0.5099} & 0.6290 & \best{0.8040} \\
LayoutDM & 0.3719 & 0.5095 & 0.6330 & \second{0.8110} \\
\hline
DiffuseSG$_{\text{lay}}$ & \second{0.4644} & \second{0.5153} & \second{0.6960} & 0.8023 \\
{\name}$_{\text{lay}}$  & \best{0.4724} & \best{0.5371} & \best{0.7130} & 0.8010 \\
\hline
\end{tabular}
\label{tab:layout_metrics}
\end{minipage}
\hfill
\begin{minipage}[t]{0.49\textwidth}
\centering
\caption{Comparing results of SG completion task (\colorbox{lightgreen}{Best}).}
\fontsize{7}{8}\selectfont
\begin{tabular}{|c|c|c|c|c|c|c|c|c|}
\hline
& \multicolumn{4}{c|}{Single Object $(\uparrow)$} & \multicolumn{4}{c|}{Single Relation $(\uparrow)$} \\
\hline
Baselines & w$_1$ & w$_{10}$ & w$_{50}$ & w$_{100}$ & w$_1$ & w$_{10}$ & w$_{50}$ & w$_{100}$ \\
\hline
\multicolumn{9}{|c|}{\textbf{Visual Genome}} \\
\hline
DiGress & 8.5 & 14.9 & 62.7 & 86.6 & 9 & 27.8 & 64.1 & 88.8 \\
DiffuseSG$_{\text{sg}}$ & 10.2 & 20.4 & 73.6 & 90 & 11.3 & 36.1 & 74.4 & 93.5 \\
{\name}$_{\text{sg}}$ & 16.7 & 28.1 & 82.9 & \best{95.3} & 19.4 & 43.6 & 87 & \best{98.7} \\
\hline
\multicolumn{9}{|c|}{\textbf{CompSGBench}} \\
\hline
DiGress & 5.9 & 10.9 & 55.5 & 82.5 & 7.7 & 19.8 & 50.1 & 70.5 \\
DiffuseSG$_{\text{sg}}$ & 8.6 & 19.3 & 66.9 & 85 & 8.8 & 27.9 & 70.3 & 83.5 \\
{\name}$_{\text{sg}}$  & 14.4 & 25.4 & 72.1 & \best{92.6} & 14 & 34.4 & 85.2 & \best{94.6} \\
\hline
\end{tabular}
\label{tab:sg_comple}
\end{minipage}
\vspace{-0.2in}
\end{table*}

\emph{Unconditional SG to Layout}:
Recall from Fig. \ref{fig:overview} that a layout head extracts layouts from the graph generator. 
Similar to graph evaluation, we sample $N=1000$ graphs and obtain their corresponding layouts. 
We restrict evaluation to scene graph datasets that have ground truth layout records. 
Table \ref{tab:layout_metrics} illustrates that {\name} achieves comparable performance to dedicated layout-baselines, namely LayoutDM and LayoutDiffusion. 
Of particular note is that {\name} beats DiffuseSG, despite the latter modeling layout as a diffusion state. 
The authors use the mismatched continuous diffusion framework to model scene graphs (and later encode them back to categorical results) for explicitly modeling layout as a continuous variable. 
However, these results indicate that by pursuing gains from layouts, DiffuseSG does not fully extract the gains from scene graphs. 

\emph{Scene graph completion}:
We report results on auxiliary scene graph tasks: single node and single relation completion. 
Following \cite{xu2024joint}, Table \ref{tab:sg_comple} reports win rates over $N=\{1,10,50,100\}$ graph samples, i.e., we mask one object or relation at random, and for such a masked sample, the model predicts $N$ completions. 
The win rate is computed as the number of graph samples out of $N$ that match the ground truth object and relation label. 
{\name} outperforms both DiffuseSG and DiGress. 
This is not surprising because {\name} is designed to explicitly learn object-relation semantics during the diffusion training. 
Figures \ref{fig:comple_vg}, \ref{fig:comple_vg_1}, \ref{fig:comple_comp}, \ref{fig:comple_comp_1} plot the corresponding qualitative completion results.

\circled{2} \textbf{Text-to-Image Generation}

Text-conditioned scene graphs, $G| T$, are sampled from the CLIP-reward tilted distribution (Eqn. \ref{eqn:reward-tilt}). 
The sampled graphs are passed as conditioning to an SG-Image generator, SDXL-SG. \cite{li2024laionsgenhancedlargescaledataset}.
The additional gain from SGs, over text-prompts alone, is evaluated by comparing {\name} against text-only image generation models: (1) SDXL, (2) ComposedDiffusion \cite{compose-diff}, an inference-time approach that breaks text prompts into concepts and guides image diffusion sampling with an algebraic score function composition of individual concepts, and (3) CO3 \cite{dutta2026steerawaymodecollisions}, also an inference-time intervention, that hypothesizes that image alignment collapses due to concept-mode collisions, and proposes a re-sampler/corrector algorithm to mitigate the issue.

Table \ref{tab:image_metrics_coco} demonstrates {\name}'s performance gains over text-only solutions, on popular composition metrics---ImageReward (IR) and BLIP-VQA. 
Figure \ref{fig:t2i_comp} shows some qualitative results, while Figure \ref{fig:t2i_sg} visualizes corresponding text-conditioned generated SGs.
Table \ref{tab:image_metrics_comp} extends Table \ref{tab:image_metrics_coco}'s  analysis to metrics measuring more granular structural attributes of the generated images for challenging scenarios (like long-prompts, multi-object and multi-relational spatial and semantic scenarios) in the CompSGBench benchmark. 
SG/E/R-IoU compare the structural components (scene graphs, objects, entities), detected in each method's final image, using an LLM. 
Gains are substantive across IoUs, ImageReward, and BLIP-VQA metrics, highlighting in essence that specialized text-based interventions are limited, and SGs---by constructing a more visual-scene compatible, structural representation of text--- offer robust conditioning signals for image synthesis. 
In particular, {\name} exhibits significant relation learning performance (Rel-IoU) without compromising object learning or structure, reinforcing the two key innovations: decoupling structure and semantics $E \to (E,R^+)$, and factorized sampler $V \to E \to R^+$.  
Additional qualitative results, including SGs for Figure \ref{fig:t2i_comp}, are in Appendix Sec. \ref{app:t2i_qual_res}.

Finally, we evaluate the grounding signals extracted by {\name}'s layout-head by using the GLIGEN backbone for layout-to-image generation \cite{li2023gligenopensetgroundedtexttoimage}. 
Baselines models, LayoutLLM-T2I and LDM, first use an LLM to translate text to layouts and then employ GLIGEN to perform layout-to-image generation. 
Table \ref{tab:image_metrics_layout} and Figure \ref{fig:t2i_layout} (qualitative)  reveal that {\name} can keep up with more powerful LLMs in extracting structural cues from input text. 
Thus, {\name} captures the inherent visual structure, generating visually consistent SGs.

\begin{table*}[tbp] 
\centering 
\caption{Comparing against text-only methods (\colorbox{lightgreen}{Best}, \colorbox{lightblue}{2nd best}).}
\small 
\begin{tabular}{@{}lccccc@{}} 
\toprule 
Method & FID ($\downarrow$) & CLIP-I2T ($\uparrow$) & CLIP-I2I ($\uparrow$) & BLIP-VQA ($\uparrow$) & ImageReward ($\uparrow$) \\ 
\midrule 
\multicolumn{6}{c}{\textbf{CompSGBench} \cite{li2023gligenopensetgroundedtexttoimage}} \\ 
\midrule 
SDXL      & \best{31.34} & \best{0.7415} & \best{0.3361} & 0.6425 & 1.1286 \\ 
ComposeDiff      & 49.72 & 0.6515 & 0.2844 & 0.3860 & -0.9213 \\ 
CO3       & 36.06 & 0.7348 & 0.3347 & \second{0.6724} & \second{1.2070} \\ 
{\name}$_{\text{sg}}$      & \second{33.85} & \second{0.7399} & \second{0.3350} & \best{0.7854} & \best{1.4512} \\ 
\midrule 
\multicolumn{6}{c}{\textbf{COCO} \cite{lin2015microsoftcococommonobjects}} \\ 
\midrule 
SDXL      & 43.02 & 0.7190 & 0.3177 & 0.6001 & 0.6409 \\ 
ComposeDiff      & 41.94 & 0.6518 & 0.2702 & 0.2981 & -1.2087 \\ 
CO3       & \second{40.26} & \second{0.7229} & \second{0.3218} & \second{0.6282} & \second{0.8857} \\ 
{\name}$_{\text{sg}}$     & \best{38.68} & \best{0.8228} & \best{0.3386} & \best{0.6734} & \best{1.1028} \\ 
\bottomrule 
\end{tabular} 
\label{tab:image_metrics_coco} 
\vspace{-0.07in}
\end{table*}


\begin{figure*}[h]
    \centering

    \begin{minipage}[t]{0.49\textwidth}
        \centering
        \includegraphics[width=\linewidth]{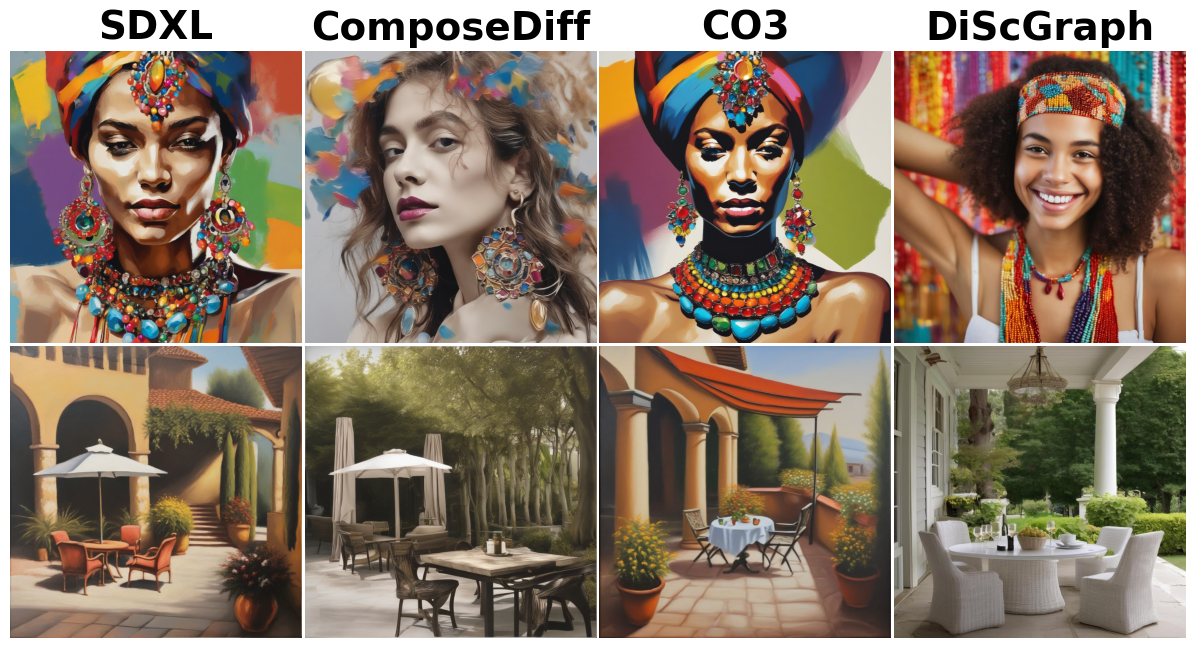}
        \includegraphics[width=\linewidth]{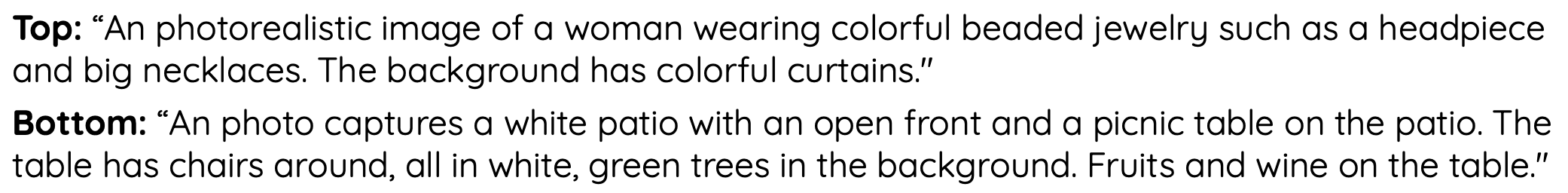}
    \end{minipage}
    \hfill
    \begin{minipage}[t]{0.49\textwidth}
        \centering
        \includegraphics[width=\linewidth]{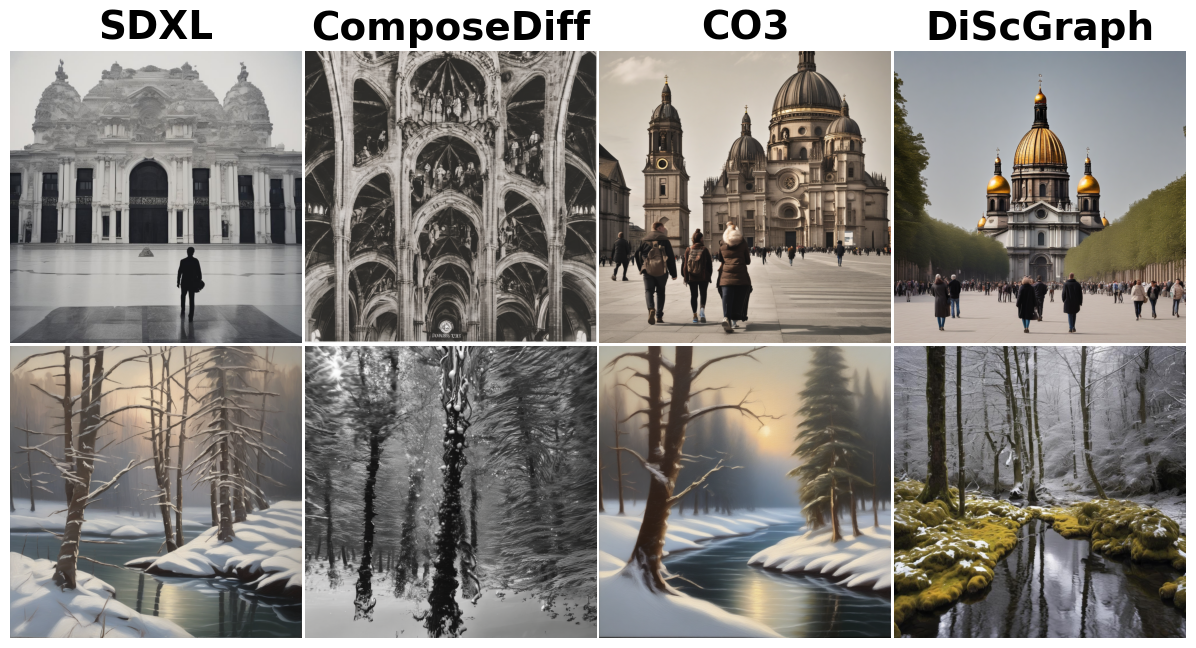}
        \includegraphics[width=\linewidth]{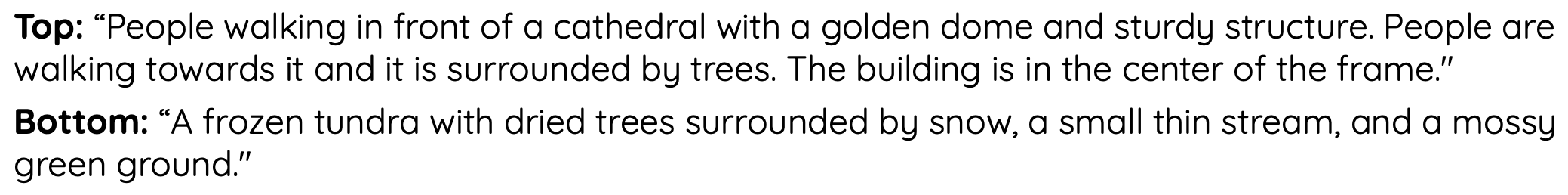}
    \end{minipage}
    \caption{\small{Qualitative results: Comparing T2I generation between SDXL, ComposeDiff, CO3, and {\name}.}}
    \label{fig:t2i_comp}
    \vspace{-0.8em}
\end{figure*}

\begin{figure}[hbt!]
        \centering
        \includegraphics[width=0.85\linewidth]{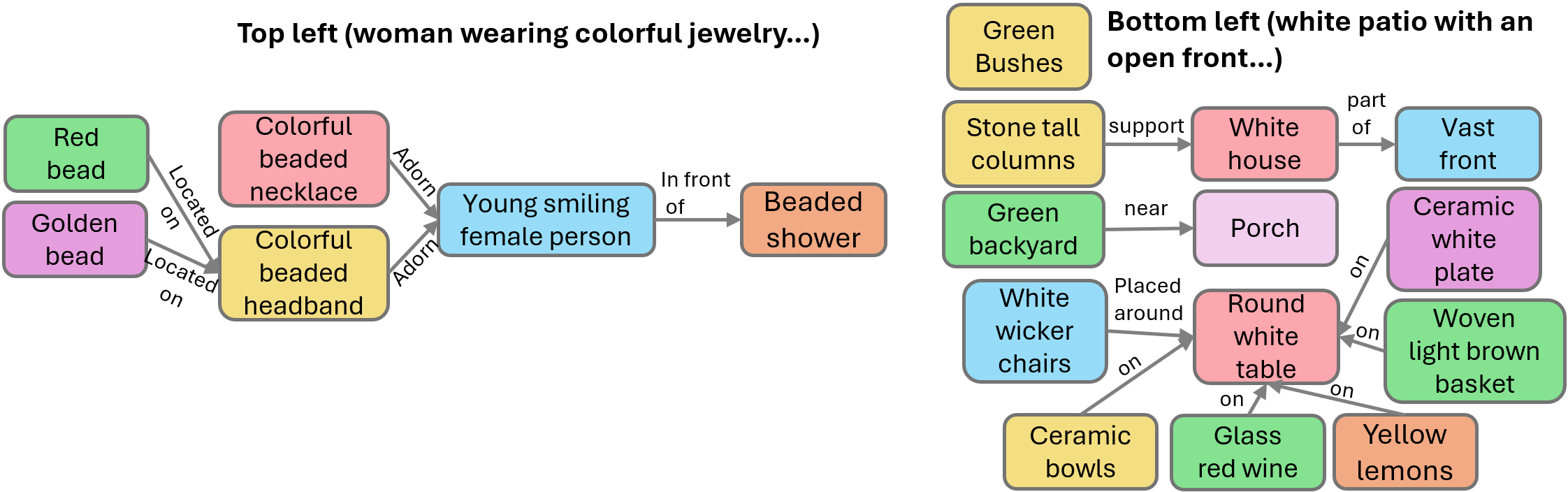}
    \caption{\small{Qualitative Results: Text-conditioned SGs used in SG-to-image results in Figure \ref{fig:t2i_comp}.}}
    \label{fig:t2i_sg}
    \vspace{-1.5em}
\end{figure}

\begin{table*}[tbp] 
\centering
\setlength{\tabcolsep}{3pt}
\renewcommand{\arraystretch}{1.2}

\begin{minipage}[t]{0.40\textwidth}
\centering
\caption{\small{Comparing text-only methods over structural image metrics (\colorbox{lightgreen}{Best}, \colorbox{lightblue}{2nd best}).}}
\fontsize{7}{8}\selectfont
\begin{tabular}{|c|c|c|c|}
\hline
\multicolumn{4}{|c|}{\textbf{CompSGBench}} \\
\hline
Method & SG-IoU ($\uparrow$) & E-IoU ($\uparrow$) & R-IoU ($\uparrow$) \\
\hline
SDXL & 0.3260 & 0.7620 & 0.7188 \\
ComposeDiff & 0.3250 & 0.7801 & 0.7169 \\
CO3 & \second{0.4271} & \second{0.7945} & \second{0.7203} \\
{\name}$_{\text{sg}}$ & \best{0.5629} & \best{0.8357} & \best{0.8299} \\
\hline
\end{tabular}
\label{tab:image_metrics_comp}
\end{minipage}
\hfill
\begin{minipage}[t]{0.59\textwidth}
\centering
\caption{\small{\textbf{Comparing layout head's results with layout-to-image methods (\colorbox{lightgreen}{Best}, \colorbox{lightblue}{2nd best}).}}}
\fontsize{7}{8}\selectfont
\begin{tabular}{|c|c|c|c|c|c|}
\hline
\multicolumn{6}{|c|}{\textbf{COCO}} \\
\hline
Method & FID ($\downarrow$) & CLIP-I2T ($\uparrow$) & CLIP-I2I ($\uparrow$) & BLIP-VQA ($\uparrow$) & IR ($\uparrow$) \\
\hline
LayoutDM & 72.32 & 0.5204 & 0.2955 & 0.6519 & 0.9206 \\
LayoutLLM-T2I & 69.01 & \second{0.6789} & \second{0.5373} & \best{0.8714} & 1.2966 \\
LDM & \best{68.84} & \best{0.6828} & 0.5386 & \second{0.8286} & \best{1.3552} \\
{\name}$_{\text{lay}}$ & \second{71.59} & 0.6802 & \best{0.5400} & 0.8599 & \second{1.3548} \\
\hline
\end{tabular}
\label{tab:image_metrics_layout}
\end{minipage}

\end{table*}

\circled{3} \textbf{Ablation Studies}

(1) \emph{Factorized Formulation}: 
We ablate on our main contributions: (1) the factorized state, and (2) the dependency-aware reverse sampler, $V\to E \to R$. 
As expected, a joint $E,R$ model for abstract graphs (using an independent $V,E$ sampler) achieves comparable performance to DiGress (Table \ref{tab:graph_metrics}). 
This joint model is then replaced by the factorized state $\mathcal{SG}:=\{(V,E,R^+), e_{ij} = 0 \implies r_{ij}=0\}$ and the corresponding hybrid corruption strategy is applied; however, the sampler is not dependency-aware and only respects the factorization i.e., $P(V_{t-1}\mid x_t)P(E_{t-1}\mid x_t)P(R_{t-1}\mid x_t,E_{t-1})$. 
The metrics---particularly over structure (ID/OD-MMD) and relation (R-MMD, RARE-K-TV)---improve substantially, validating the importance of $E,R^+$ factorization. 
Next, when the reverse sampler includes object-relation dependencies (See Eqn. \ref{eqn:rev_sampler}), the Triplet-TV metric improves, implying that the model captures semantic consistency between $V,E,R$ without compromising structure or relations. 
Finally, grounding cues can be extracted by adding the layout head,  without compromising the scene graph quality.

(2) We ablate over random and mask corruption ratios (Table \ref{tab:ablation_3}) and observe best performance when R/M=0.8. 
With pure random corruption (R/M=1), we see a drop in relation metrics (R-MMD, RARE-K-TV) and Triplet-TV, supporting our claims that absorbing-mask corruption boosts semantic learning by preserving structure and only deleting semantic information. 
On the other hand, with only mask corruption (R/M=0), the graph metrics are poor, indicating that the corruption does not meaningfully destroy the graph structure, impeding learning the graph distribution. 
When each corruption is applied equally (R/M=0.5), performance improves over R/M=1, but graph metrics are still poor. 
Thus, partial masking improves semantic learning, but it must only supplement random corruption, not replace it.
Appendix Sec. \ref{app:ablation} presents additional ablations.
\vspace{-0.05in}

\begin{table*}[tbp]
\centering

\setlength{\tabcolsep}{3pt}

\begin{minipage}[t]{0.49\textwidth}
\centering
\caption{Ablation on various components of DiScGraph: \textbf{1}=Joint, \textbf{2}=Factorized state, \textbf{3}=2+Dependency-aware sampler, \textbf{4}=3+Layout Head on VG (\colorbox{lightgreen}{Best}).}
\fontsize{6.5}{8.5}\selectfont
\begin{tabular}{|c|c|c|c|c|c|c|}
\hline
ID & N-MMD & E-MMD & ID-MMD & OD-MMD & TRIP-TV & R-K-TV\\
\hline
1 & 0.0077 & 0.0078 & 0.0068 & 0.0062 & 0.6910 & 0.8118\\
2 & 0.0066 & 0.0066 & 0.0056 & 0.0056 & 0.5822 & 0.6099\\
3 & 0.0046 & 0.0058 & 0.0043 & 0.0046 & 0.4878 & 0.5199\\
4 & \best{0.0046} & \best{0.0058} & \best{0.0043} & \best{0.0046} & \best{0.4920} & \best{0.5194}\\
\hline
\end{tabular}
\label{tab:ablation_0}
\end{minipage}
\hfill
\begin{minipage}[t]{0.49\textwidth}
\centering
\caption{\textbf{\small{Ablation on the different ratios of random and absorbing mask corruptions on Visual Genome (\colorbox{lightgreen}{Best}).\\}}}
\fontsize{6.5}{7.5}\selectfont
\begin{tabular}{|c|c|c|c|c|c|c|}
\hline
R/M & N-MMD & R-MMD & ID-MMD & OD-MMD & TRIP-TV & R-K-TV \\
\hline
1 & 0.0051 & 0.0118 & 0.0045 & 0.0049 & 0.6369 & 0.8326 \\
\best{0.8} & \best{0.0046} & \best{0.0058} & \best{0.0043} & \best{0.0046} & \best{0.4920} & \best{0.5194} \\
0.5 & 0.0058 & 0.0091 & 0.0063 & 0.0060 & 0.5735 & 0.7006 \\
0 & 0.4533 & 0.6099 & 0.9751 & 1.0000 & 0.9213 & 0.9992 \\
\hline
\end{tabular}
\label{tab:ablation_3}
\end{minipage}

\end{table*}

\section{Related Work}
\label{related}
\vspace{-0.1in}
$\blacksquare$ \textbf{Deep Graph Generative Models:} Pre-diffusion autoregressive, VAE, and GAN graph generators \cite{you2018graphrnn,liao2019gran,samanta2019nevae,li2018learning,ingraham2019generative,simonovsky2018graphvae,decao2018molgan} impose special constraints like node ordering, unlabelled, fixed-size graphs, poor scaling, and training instabilities. Concurrently, many works explored scene graph generation (VarScene, SceneGraphGen) \cite{verma2022varscene,garg2021unconditional} and prediction  \cite{newell2017pixels,li2018factorizable,xu2017scene,yang2018graph,zellers2018neural} tasks. \\
$\blacksquare$ \textbf{Diffusion Graph Generation.}
\textit{\textbf{(1) Continuous:}}
Diffusion models have replaced autoregressive approaches as the SOTA graph modeling paradigm. 
 Early graph diffusion models \cite{niu2020permutationinvariantgraphgeneration},\cite{jo2022score}, \cite{huang2022graphgdp} applied thresholding to snap continuous generations to categorical. DiffuseSG \cite{xu2024joint} adapts this idea for grounded scene graphs. Grounding augments standard scene graphs' node-object representation to include layouts. The rationale is layout is continuous while graph components are discrete, so grounded SGs are treated as continuous. 
\textit{\textbf{(2) Discrete diffusion models:}}
 Following the success of discrete diffusion for text, audio, point clouds etc. \cite{trippe2023diffusion,hoogeboom2022equivariant,wu2023diffmdgeometricdiffusionmodel}, DiGress \cite{vignac2023digress} designs a discrete diffusion model for abstract graphs (molecules, traffic, etc.). The forward process corrupts nodes and edges independently. The training enforces permutation invariance, and DiGress produces exchangeable graph distributions. While this works well for abstract graphs, scene graphs are inherently directed, have large vocabularies, long-tailed relation distributions, and expect semantic consistency between objects and relations. Our work {\name} addresses these scene graph-specific requirements with a hierarchical constrained factorized state space and corresponding factorized reverse sampler. \\
$\blacksquare$ \textbf{Controllable Image Generation:}
\textit{\textbf{(1) Layout \& LLM-augmented methods:}} Layout generation works~\citep{box-diff,kong2022blt,inoue2023layoutdm,zhou2023layoutdiffusion} use autoregressive and discrete diffusion paradigms to generate layouts or leverage LLM reasoning to infer layouts from text~\citep{layoutllm,lian2024llmgroundeddiffusionenhancingprompt}, With backbones like  GLIGEN \cite{li2023gligenopensetgroundedtexttoimage}, ControlNet \cite{zhang2023addingconditionalcontroltexttoimage}, they can generate layout-to-image. 
\textit{\textbf{(2) T2I Composition works:}} such as Composed-Diffusion~\citep{compose-diff} and CO3 \cite{dutta2026steerawaymodecollisions} enable algebraic composition of text concepts to rectify misaligned attributes.
\textit{\textbf{(3) SG-to-Image:}} Several works \cite{johnson2018image,yang2022diffusionbasedscenegraphimage,farshad2023scenegenie} train specific SG-to-image generators. SG-Adapter \cite{shen2024sgadapterenhancingtexttoimagegeneration} introduces SG IP-adapter that activates SG conditioning in existing image models. SDXL-SG \cite{li2024laionsgenhancedlargescaledataset} trains a large-scale SOTA image generator SDXL conditioned on scene graphs. SGEdit \cite{zhang2024sgedit} performs SG-to-image editing, and SGDiff \cite{zhang2025sgdiffscenegraphguided} generates SG to image captions.

\section{Limitations \& Future Scope}
\label{sec:limit}
\vspace{-0.1in}
We designed a scene graph generator as a factorized discrete diffusion model, {\name}, that has a factorized state space decoupling structure from semantics for better relation learning and a dependency-aware reverse sampler to sample SGs with strong entity dependence. Furthermore, we facilitated text-conditioned scene graph generation by sampling from a text-reward tilted distribution. The text-conditioned model was then used downstream in image generation. We provided empirical evidence on T2I generation improvements, unconditional scene graph generation metrics, and layout metrics. 

While {\name} achieves good scene graph generation and T2I generation performance, there is room for improvement,
(1) Our text conditioning framework operates at inference time through reward-tilted sampling, avoiding conditional model retraining. While this works well for textual abstractions -- SGs, the quality of conditioning depends heavily on the reward function (CLIP). Future work could explore stronger learned reward models. 
(2) Graph transformers scale quadratically with the number of scene entities and relations. While practical for current scene graph datasets, scaling to very large structured scenes may require techniques like sparse attention. 
(3) Discrete frameworks are restricted by sampler quality, requiring sampling refinements (Sec. \ref{app:sampler_refinements}, \ref{app:training}). For future work, we plan to address it in a more principled way by designing a robust sampling algorithm.   
(4) Layout prediction can become ambiguous for scene graphs with many disconnected sub-graphs. Future work could explore a latent layout generation that models multiple plausible global layouts conditioned on the same SG.
(5) Scene graph quality is dependent on the text input and is encumbered by unexpressive text prompts, limiting improvements to image generation. Limited availability of large-scale scene graph data and scene graph conditioned image generators also limits development; however, recent efforts like LAION-SG \cite{li2024laionsgenhancedlargescaledataset} are a step in the right direction.



\bibliography{reference}
\bibliographystyle{plain}

\newpage
\appendix
\section{Appendix}
\section*{Appendix Table of Contents} 
\etocsettocstyle{}{}             
\etocsetnexttocdepth{subsection} 
\localtableofcontents
\newpage

\subsection{Structure-aware Forward Process -- Additional Details}
In this section, we provide a detailed analysis of the forward corruption steady state convergence, marginals over object, edge, and relation entities, and different corruption strategies.
\subsubsection{Structure-Aware Forward Diffusion Convergence}
\label{app:fwd_conv}
We derive the convergence of the factorized forward process to a stationary distribution and show that it matches the scene graph data distribution formulation. 

We define a forward diffusion process over the structured scene graph state
\begin{equation} \begin{aligned}
x_t = (V_t, E_t, R_t^+),
\end{aligned} \end{equation} 
where object labels $V_t$, edge existence $E_t$, and relation labels $R_t^+$ satisfy the constraint
\begin{equation} \begin{aligned}
e_{ij,t} = 0 \;\Rightarrow\; r_{ij,t} = 0.
\end{aligned} \end{equation} 

The forward process is defined as a Markov chain:
\begin{equation} \begin{aligned}
q(x_{1:T} \mid x_0) = \prod_{t=1}^T q(x_t \mid x_{t-1}),
\end{aligned} \end{equation} 
with factorized transition kernel:
\begin{equation} \begin{aligned}
q(x_t \mid x_{t-1})
=
q_V(V_t \mid V_{t-1})
q_E(E_t \mid E_{t-1})
q_R(R_t \mid R_{t-1}, E_t).
\end{aligned} \end{equation} 

\paragraph{Categorical Corruption Kernel.}
For a categorical variable $z_t \in \{1,\dots,K\}$, we define
\begin{equation} \begin{aligned}
Q_t = (1 - \beta_t) I + \beta_t \mathbf{1}\pi^\top,
\end{aligned} \end{equation} 
where $\pi \in \Delta^{K-1}$ is a fixed prior distribution. The transition is
\begin{equation} \begin{aligned}
q(z_t = b \mid z_{t-1} = a) = [Q_t]_{ab}.
\end{aligned} \end{equation} 

In the time-homogeneous case,
\begin{equation} \begin{aligned}
Q = (1-\beta)I + \beta \mathbf{1}\pi^\top,
\end{aligned} \end{equation} 
we have 
\begin{equation} \begin{aligned}
\pi^\top Q = \pi^\top,
\end{aligned} \end{equation} 
so $\pi$ is a stationary distribution.

Moreover,
\begin{equation} \begin{aligned}
p_t^\top - \pi^\top = (1-\beta)^t (p_0^\top - \pi^\top),
\end{aligned} \end{equation} 
so
\begin{equation} \begin{aligned}
p_t \to \pi \quad \text{as } t \to \infty.
\end{aligned} \end{equation} 

In the time non-homogeneous case,
\begin{equation} \begin{aligned}
Q_t = (1-\beta_t)I + \beta_t \mathbf{1}\pi^\top,
\end{aligned} \end{equation} 
we obtain
\begin{equation} \begin{aligned}
p_t^\top - \pi^\top = \left(\prod_{s=1}^t (1-\beta_s)\right)(p_0^\top - \pi^\top),
\end{aligned} \end{equation} 
and convergence holds if
\begin{equation} \begin{aligned}
\prod_{s=1}^t (1-\beta_s) \to 0.
\end{aligned} \end{equation} 

\paragraph{Object and Edge Corruption.}
Object labels and edge existence follow standard discrete diffusion corruption,
\begin{equation}
q_V(v_{i,t} \mid v_{i,t-1})
=
(1-\beta_t^V)\delta_{v_{i,t-1}}
+
\beta_t^V \pi_V,
\end{equation}
\begin{equation} \begin{aligned}
q_V(V_t \mid V_{t-1}) = \prod_i q_V(v_{i,t} \mid v_{i,t-1}),
\end{aligned} \end{equation} 
\begin{equation}
Q_t^V = (1-\beta_t^V)I + \beta_t^V \mathbf{1}\pi_V^\top.
\end{equation}
where $\pi_V$ is a prior distribution over object labels, can be chosen as uniform or the empirical training data distribution, and $\delta$ is a point-mass distribution.

Each directed edge is, $e_{ij,t} \in \{0,1\}.$
\begin{equation}
q_E(e_{ij,t} \mid e_{ij,t-1})
=
(1-\beta_t^E)\delta_{e_{ij,t-1}}
+
\beta_t^E \pi_E,
\end{equation}
with  prior from training data statistics on edge existence,  $\pi_E(1)=\rho_E, \quad \pi_E(0)=1-\rho_E.$
\begin{equation} \begin{aligned}
q_E(E_t \mid E_{t-1}) = \prod_{i \neq j} q_E(e_{ij,t} \mid e_{ij,t-1}).
\end{aligned} \end{equation} 
\begin{equation}
Q_t^E = (1-\beta_t^E)I + \beta_t^E \mathbf{1}\pi_E^\top.
\end{equation}
Under the above conditions on $\beta_t^V$ and $\beta_t^E$, we obtain
\begin{equation} \begin{aligned}
q(V_t) \to \pi_V(V) = \prod_i \pi_V(v_i),
\end{aligned} \end{equation} 
\begin{equation} \begin{aligned}
q(E_t) \to \pi_E(E) = \prod_{i \neq j} \pi_E(e_{ij}).
\end{aligned} \end{equation} 

\paragraph{Structure-Aware Relation Corruption.}
Relation corruption is conditioned on edge existence,
\begin{equation} \begin{aligned}
q_R(r_{ij,t} \mid r_{ij,t-1}, e_{ij,t}) =
\begin{cases}
\Tilde q_R(r_{ij,t} \mid r_{ij,t-1}), & e_{ij,t} = 1,\\
\delta_0(r_{ij,t}), & e_{ij,t} = 0,
\end{cases}
\end{aligned} \end{equation} 
where $\delta_0$ is the point mass at $0$, and
\begin{equation} \begin{aligned}
\Tilde Q_t^R = (1-\beta_t^R)I + \beta_t^R \mathbf{1}\pi_R^\top.
\end{aligned} \end{equation} 

Thus, for active edges,
\begin{equation} \begin{aligned}
\Tilde q_R(r_{ij,t} \mid e_{ij,t} = 1) \to \pi_R(r_{ij}),
\end{aligned} \end{equation} 
while for inactive edges,
\begin{equation} \begin{aligned}
q(r_{ij,t} = 0 \mid e_{ij,t} = 0) = 1.
\end{aligned} \end{equation} 

\paragraph{Structured Stationary Distribution.}
The limiting distribution over scene graphs is therefore
\begin{equation} \begin{aligned}
\pi_{\mathrm{SG}}(V,E,R)
=
\prod_i \pi_V(v_i)
\prod_{i \neq j} \pi_E(e_{ij})
\prod_{i \neq j : e_{ij}=1} \pi_R(r_{ij})
\prod_{i \neq j : e_{ij}=0} \delta_0(r_{ij}).
\end{aligned} \end{equation} 

Equivalently,
\begin{equation} \begin{aligned}
\pi_{\mathrm{SG}}(V,E,R)
=
\pi_V(V)\,\pi_E(E)\,\pi_R(R \mid E),
\end{aligned} \end{equation} 
where
\begin{equation} \begin{aligned}
\pi_R(R \mid E)
=
\prod_{e_{ij}=1} \pi_R(r_{ij})
\prod_{e_{ij}=0} \delta_0(r_{ij}).
\end{aligned} \end{equation} 

This distribution is supported only on valid scene graphs.

\begin{lemma}
Factorized diffusion state convergence:
Assume that
\begin{equation} \begin{aligned}
\prod_{s=1}^t (1-\beta_s^V) \to 0,\quad
\prod_{s=1}^t (1-\beta_s^E) \to 0,\quad
\prod_{s=1}^t (1-\beta_s^R) \to 0.
\end{aligned} \end{equation} 
Then the forward process satisfies
\begin{equation} \begin{aligned}
q(x_t \mid x_0) \to \pi_{\mathrm{SG}}(V,E,R).
\end{aligned} \end{equation} 
\end{lemma}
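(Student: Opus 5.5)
The plan is to exploit the factorized structure of the forward kernel and reduce everything to the one-dimensional convergence statement already established for the categorical kernel $Q_t=(1-\beta_t)I+\beta_t\mathbf{1}\pi^\top$. There, $p_t^\top-\pi^\top=\prod_{s\le t}(1-\beta_s)\,(p_0^\top-\pi^\top)$. Since the state space $\mathcal{SG}$ (for fixed $N$) is finite, convergence of $q(x_t\mid x_0)$ to $\pi_{\mathrm{SG}}$ means pointwise convergence of probabilities, or equivalently convergence in total variation. It therefore suffices to show that $q(x_t\mid x_0)$ factorizes into a product of finitely many factors, each of which converges.

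\textbf{Step 1: Objects and edges.} The kernel $q(x_t\mid x_{t-1})$ is a product over $i$ of $q_V$ and over $i\neq j$ of $q_E$, and the $V$ and $E$ chains do not depend on $R$. Marginalizing the Markov chain over $R_{1:t}$ therefore gives
\[
q(V_t,E_t\mid x_0)=\prod_i [v_{i,0}\bar Q^V_t]_{v_{i,t}}\prod_{i\neq j}[e_{ij,0}\bar Q^E_t]_{e_{ij,t}} .
\]
By the earlier computation, each factor converges to $\pi_V(v_{i,t})$, respectively $\pi_E(e_{ij,t})$, under the stated product assumptions.

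\textbf{Step 2: Relations, conditionally on the edge trajectory.} The relation kernel factorizes over pairs, so it suffices to fix a pair $(i,j)$ and condition on the whole edge path $e_{ij,1:t}$. Let $\tau$ be the last time $s\le t$ with $e_{ij,s}=0$; set $\tau=0$ if there is none and $e_{ij,0}=1$.

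If $e_{ij,t}=0$, then $r_{ij,t}=0$ almost surely, which matches $\delta_0$.

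If $e_{ij,t}=1$, the relation is either reset at time $\tau$ or carried from $r_{ij,0}$, and then evolves by $\tilde Q^R_s$ for $s=\tau+1,\dots,t$. The resetting at $\tau$ needs a convention: the kernel as written sends an inactive $r=0$ to $\tilde q_R(\cdot\mid 0)$ when the edge reactivates. I would adopt the natural convention that row $0$ of $\tilde Q^R$ is $\pi_R$, i.e.\ a freshly born relation is drawn from the prior. Under this convention, a reset gives exactly $\pi_R$, which is stationary for $\tilde Q^R$. If no reset occurred ($\tau=0$), the law is $r_{ij,0}\prod_{s\le t}\tilde Q^R_s$, which differs from $\pi_R$ by at most $\prod_{s\le t}(1-\beta^R_s)$ in the coefficient formula.

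Hence in all cases
\[
\bigl\|q(r_{ij,t}\mid e_{ij,1:t},x_0)-\pi_R\bigr\|\le \prod_{s\le t}(1-\beta^R_s)\to 0 .
\]
This bound is uniform over edge paths, so averaging over the edge path conditioned on $e_{ij,t}$ gives $q(r_{ij,t}\mid e_{ij,t}=1,x_0)\to\pi_R$.

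\textbf{Step 3: Assembly.} Independence across pairs $(i,j)$ and nodes $i$ holds because both the kernel and the initial condition factorize. Writing
\[
q(x_t\mid x_0)=q(V_t\mid x_0)\prod_{i\neq j}q(e_{ij,t}\mid x_0)\,q(r_{ij,t}\mid e_{ij,t},x_0),
\]
each factor converges, and a finite product of convergent sequences converges to the product of the limits, which is $\pi_{\mathrm{SG}}$.

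The main obstacle is Step 2. The relation chain is not a time-homogeneous Markov chain on its own, since it is gated by a random edge process, and the paper's kernel leaves the resurrection rule from $r=0$ implicit. I will make that convention explicit, and the uniform-over-paths bound is what removes the dependence on the edge trajectory. If the intended convention instead keeps a stored relation label through deactivation, the same argument still works, because the bound $\prod(1-\beta^R_s)$ remains valid.
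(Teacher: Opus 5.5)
Your proposal is correct and follows the same skeleton as the paper's proof. Both use categorical convergence for $V$ and $E$, then convergence of the relation factor conditional on the edge state, then multiply finitely many convergent factors.

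The difference is in the relation step. The paper asserts $q(r_{ij,t}\mid r_{ij,0},e_{ij,t}=1)\to\pi_R$ and writes $q(x_t\mid x_0)=q(V_t\mid V_0)\,q(E_t\mid E_0)\,q(R_t\mid R_0,E_t)$, as if $R_t$ depended on the edge process only through $E_t$. Implicitly it relies on the closed form $\mathrm{Cat}(r_{ij,0}\bar Q^R_t)$ from the marginals section. That treats the relation as a latent chain corrupted at every step regardless of edge history. It also leaves unspecified what happens when an edge reactivates from $r=0$.

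Your conditioning on the full edge path, your explicit reactivation convention, and your path-uniform bound $\prod_{s\le t}(1-\beta^R_s)$ are exactly what makes that step rigorous. The convention is genuinely needed. Read literally, $\tilde q_R(\cdot\mid 0)=(1-\beta^R_t)\delta_0+\beta^R_t\pi_R$ keeps mass on $r=0$ for freshly reactivated edges. With constant $\beta^E,\beta^R\in(0,1)$ and $\pi_E(0),\pi_E(1)>0$, the limiting mass on $(e=1,r=0)$ is at least $\pi_E(0)\beta^E\pi_E(1)(1-\beta^R)>0$. Since $\pi_{\mathrm{SG}}$ puts no mass there, the lemma would fail under the literal reading.

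One caveat concerns your closing remark. The bound $\prod_{s\le t}(1-\beta^R_s)$ survives a ``stored label'' convention only if the stored label keeps being corrupted during inactive steps, which is the paper's implicit reading. If the label is frozen while the edge is off, the contraction is only $\prod_{s\in A}(1-\beta^R_s)$ over the set $A$ of active steps. That is not uniformly small over edge paths, for example a path that is off until the last step. In that case you would need an averaging argument over the edge chain rather than a uniform bound.
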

\textbf{Proof.}  
From categorical diffusion convergence,
\begin{equation} \begin{aligned}
q(V_t \mid V_0) \to \pi_V(V), \quad
q(E_t \mid E_0) \to \pi_E(E).
\end{aligned} \end{equation} 

For each pair $(i,j)$, conditional on $e_{ij,t}$,
\begin{equation} \begin{aligned}
q(r_{ij,t} \mid r_{ij,0}, e_{ij,t})
\to
\begin{cases}
\pi_R(r_{ij}), & e_{ij,t} = 1,\\
\delta_0(r_{ij}), & e_{ij,t} = 0.
\end{cases}
\end{aligned} \end{equation} 

Thus,
\begin{equation} \begin{aligned}
q(R_t \mid R_0, E_t)
\to
\prod_{i \neq j : e_{ij,t}=1} \pi_R(r_{ij,t})
\prod_{i \neq j : e_{ij,t}=0} \delta_0(r_{ij,t}).
\end{aligned} \end{equation} 

Combining all components,
\begin{equation} \begin{aligned}
q(x_t \mid x_0)
=
q(V_t \mid V_0)
q(E_t \mid E_0)
q(R_t \mid R_0, E_t)
\to
\pi_{\mathrm{SG}}(V,E,R).
\end{aligned} \end{equation} 
The data distribution can be factorized as
\begin{equation}
p_{\text{data}}(G) = p_{\text{data}}(V)\, p_{\text{data}}(E \mid V)\, p_{\text{data}}(R^+ \mid E,V),
\end{equation}
which exposes the hierarchical visual scene structure:
objects $\rightarrow$ edges $\rightarrow$ relations.

Additionally, 

\textbf{Proposition.} If $x_{t-1} \in \mathcal{G}_{SG}$, then $x_t \sim q(x_t \mid x_{t-1})$ also lies in $\mathcal{G}_{SG}$.

From the forward corruption formulation, 
\begin{equation}
\Tilde{q}_R(r_{ij,t} \mid r_{ij,t-1},e_{ij,t}=0) = \delta_0(r_ij,t)
\end{equation}
Then the only non-zero probability relation is $r_{ij,t} = 0$.
Thus, If $x_{t-1} \in \mathcal{G}_{SG}$, then $x_t \sim q(x_t \mid x_{t-1})$ also lies in $\mathcal{G}_{SG}$, thus the valid scene graph factorized space is preserved at all timesteps in the forward process.

\subsubsection{Closed-Form Marginals and Corruption Variants}
\label{app:fwd_marginal}
A key property of discrete diffusion is the ability to directly sample from the marginal distribution $q(x_t \mid x_0)$ via closed-form transition matrices. We now state these marginals for the factorized scene-graph state.

\paragraph{Closed-form marginal for objects.}
Following discrete diffusion formulation \cite{austin2023structureddenoisingdiffusionmodels}, let $\{Q_s^V\}_{s=1}^t$ denote the object transition matrices and define the cumulative transition
\begin{equation}
\bar{Q}_t^V = Q_1^V Q_2^V \cdots Q_t^V.
\end{equation}
Then the marginal distribution at time $t$ is
\begin{equation}
q(v_{i,t} \mid v_{i,0}) = \mathrm{Cat}\!\left(v_{i,0}\, \bar{Q}_t^V\right).
\end{equation}
If $v_{i,0}$ is represented as a one-hot vector, then
\begin{equation}
p(v_{i,t} = c \mid v_{i,0}) = \left[v_{i,0}\, \bar{Q}_t^V\right]_c.
\end{equation}

\paragraph{Closed-form marginal for edges.}
Similarly, let $\bar{Q}_t^E = Q_1^E Q_2^E \cdots Q_t^E$. Then
\begin{equation}
q(e_{ij,t} \mid e_{ij,0}) = \mathrm{Cat}\!\left(e_{ij,0}\, \bar{Q}_t^E\right).
\end{equation}
For binary edges,
\begin{equation}
p(e_{ij,t} = b \mid e_{ij,0}) = \left[e_{ij,0}\, \bar{Q}_t^E\right]_b.
\end{equation}

\paragraph{Closed-form marginal for relations.} The relation marginals need to be reworked for the factorized scene graph space $\mathcal{SG}$. Relation marginals depend on the edge state at time $t$. Conditioned on an active edge,
\begin{equation}
q(r_{ij,t} \mid r_{ij,0}, e_{ij,t}=1)
= \mathrm{Cat}\!\left(r_{ij,0}\, \bar{Q}_t^R\right),
\end{equation}
where $\bar{Q}_t^R = Q_1^R Q_2^R \cdots Q_t^R$.

If the edge is inactive,
\begin{equation}
q(r_{ij,t} = 0 \mid r_{ij,0}, e_{ij,t}=0) = 1.
\end{equation}
Thus, the structural components, edges, are decoupled from the semantics, relations, but the dependency structure ensures we are operating in valid scene graph space. 
\begin{equation}
q(r_{ij,t} \mid r_{ij,0}, e_{ij,t}) =
\begin{cases}
\mathrm{Cat}\!\left(r_{ij,0}\, \bar{Q}_t^R\right), & e_{ij,t}=1, \\
\delta_0(r_{ij,t}), & e_{ij,t}=0.
\end{cases}
\end{equation}

\paragraph{Full closed-form marginal.}
Combining all components,
\begin{equation}
q(x_t \mid x_0)
=
q_V(V_t \mid V_0)\,
q_E(E_t \mid E_0)\,
q_R(R_t \mid R_0, E_t).
\end{equation}
\begin{equation}
    \begin{aligned}
q(x_t \mid x_0)
&=
\prod_i q_V(v_{i,t} \mid v_{i,0})
\prod_{i\neq j} q_E(e_{ij,t} \mid e_{ij,0}) \\
&\quad \cdot
\prod_{ij:e_{ij,t}=1} q_R(r_{ij,t} \mid r_{ij,0}, e_{ij,t}=1)
\prod_{ij:e_{ij,t}=0} \delta_0(r_{ij,t}).
\end{aligned}
\end{equation}

This enables direct sampling of noisy states of the factorized space during training.

\subsubsection{Forward Corruption Choices}
\label{app:fwd_mask}
\paragraph{Absorbing-mask corruption.}
 Given the success of absorbing masks for diffusion language tasks, apart from random corruption, we apply absorbing mask corruption, which preserves the structure and only deletes information (object and relation), allowing the model to specifically learn semantics, which is beneficial for scene graphs.
We define mask tokens $[\mathrm{MASK}]_V$ and $[\mathrm{MASK}]_R$.

For objects:
\begin{equation}
q_V(v_{i,t} \mid v_{i,t-1})
=
(1-\beta_t^V)\delta_{v_{i,t-1}}
+
\beta_t^V \delta_{[\mathrm{MASK}]_V}.
\end{equation}

For relations on active edges:
\begin{equation}
\Tilde{q}_R(r_{ij,t} \mid r_{ij,t-1})
=
(1-\beta_t^R)\delta_{r_{ij,t-1}}
+
\beta_t^R \delta_{[\mathrm{MASK}]_R}.
\end{equation}

The edge-aware kernel becomes
\begin{equation}
q_R(r_{ij,t} \mid r_{ij,t-1}, e_{ij,t}) =
\begin{cases}
(1-\beta_t^R)\delta_{r_{ij,t-1}} + \beta_t^R \delta_{[\mathrm{MASK}]_R}, & e_{ij,t}=1, \\
\delta_0(r_{ij,t}), & e_{ij,t}=0.
\end{cases}
\end{equation}
This restricts masking to valid relations.

\vspace{0.5em}
\paragraph{Hybrid random + mask corruption.}
Performing masking in isolation would not allow the model to learn structure; we need a hybrid kernel combining random replacement and mask absorption, thereby learning both structure and semantics.

For objects:
\begin{equation}
q_V(v_{i,t} \mid v_{i,t-1})
=
(1-\beta_t^V)\delta_{v_{i,t-1}}
+
\beta_t^V\big[
\rho_V \delta_{[\mathrm{MASK}]_V}
+
(1-\rho_V)\pi_V
\big].
\end{equation}

For active relations:
\begin{equation}
\Tilde{q}_R(r_{ij,t} \mid r_{ij,t-1})
=
(1-\beta_t^R)\delta_{r_{ij,t-1}}
+
\beta_t^R\big[
\rho_R \delta_{[\mathrm{MASK}]_R}
+
(1-\rho_R)\pi_R
\big].
\end{equation}

The edge-aware relation kernel remains
\begin{equation}
q_R(r_{ij,t} \mid r_{ij,t-1}, e_{ij,t}) =
\begin{cases}
\Tilde{q}_R(r_{ij,t} \mid r_{ij,t-1}), & e_{ij,t}=1, \\
\delta_0(r_{ij,t}), & e_{ij,t}=0.
\end{cases}
\end{equation}

Thus, we defined a structure-aware forward process over $x_t = (V_t, E_t, R_t^+)$
such that $q(x_t \mid x_{t-1}) = q_V q_E q_R,$
with,
\begin{equation}
q_R(R_t \mid R_{t-1}, E_t)
=
\prod_{ij:e_{ij,t}=1} \Tilde{q}_R(r_{ij,t} \mid r_{ij,t-1})
\prod_{ij:e_{ij,t}=0} \delta_0(r_{ij,t}).
\end{equation}

This construction:
(i) preserves validity of the scene-graph state space,
(ii) decouples structural sparsity from relation semantics, and
(iii) prevents rare relations from competing with the no-edge class.

\subsection{Factorized Reverse Sampler -- Additional Details}
In this section, we discuss posterior derivation for the factorized scene graph model and refinements to the sampler at inference.
\subsubsection{Factorized Reverse Model and Structured Posterior Sampler}
\label{app:rev_sample}

\paragraph{Discrete posterior for a categorical variable.} \cite{austin2023structureddenoisingdiffusionmodels,vignac2023digress}
Consider a categorical variable \(z_t\in\{1,\dots,K\}\) with forward transition matrices
\begin{equation} \begin{aligned}
Q_t,\qquad \bar Q_t = Q_1Q_2\cdots Q_t.
\end{aligned} \end{equation} 
The one-step posterior conditioned on the clean variable \(z_0\) is
\begin{equation} \begin{aligned}
q(z_{t-1}=a\mid z_t=b,z_0=c)
=
\frac{
q(z_t=b\mid z_{t-1}=a)\,
q(z_{t-1}=a\mid z_0=c)
}{
q(z_t=b\mid z_0=c)
}.
\end{aligned} \end{equation} 
Using the transition matrices, this becomes
\begin{equation} \begin{aligned}
q(z_{t-1}=a\mid z_t=b,z_0=c)
=
\frac{
[Q_t]_{ab}\,[\bar Q_{t-1}]_{ca}
}{
[\bar Q_t]_{cb}
}.
\end{aligned} \end{equation} 

Since \(z_0\) is unknown at sampling time, the model predicts a distribution
\begin{equation} \begin{aligned}
p_\theta(z_0=c\mid x_t).
\end{aligned} \end{equation} 
The learned reverse posterior marginalizes over \(z_0\):
\begin{equation} \begin{aligned}
p_\theta(z_{t-1}=a\mid z_t=b)
=
\sum_c
q(z_{t-1}=a\mid z_t=b,z_0=c)\,
p_\theta(z_0=c\mid x_t).
\end{aligned} \end{equation} 
Equivalently,
\begin{equation} \begin{aligned}
p_\theta(z_{t-1}=a\mid z_t=b)
\propto
q(z_t=b\mid z_{t-1}=a)
\sum_c
p_\theta(z_0=c\mid x_t)
q(z_{t-1}=a\mid z_0=c).
\end{aligned} \end{equation} 

\paragraph{Reverse factorization for scene graphs.}
For scene graphs, the clean state is
\begin{equation} \begin{aligned}
x_0=(V_0,E_0,R_0).
\end{aligned} \end{equation} 
Following the structured state space, we factorize the clean-state prediction as
\begin{equation} \begin{aligned}
p_\theta(x_0\mid x_t)
=
p_\theta(V_0,E_0,R_0\mid x_t).
\end{aligned} \end{equation} 

A scene-graph-aware factorization is
\begin{equation} \begin{aligned}
p_\theta(V_0,E_0,R_0\mid x_t)
=
p_\theta(V_0\mid x_t)\,
p_\theta(E_0\mid V_0,x_t)\,
p_\theta(R_0\mid V_0,E_0,x_t).
\end{aligned} \end{equation} 

This factorization follows the semantic structure of scene graphs:
\begin{enumerate}
    \item object identities define the semantic entities in the graph,
    \item directed edge existence determines which object pairs interact,
    \item relation identities are meaningful only for active directed edges.
\end{enumerate}

Thus, relation prediction is conditioned on both object identity and edge existence.

\paragraph{Object reverse posterior.}
For each object variable \(v_i\), the model predicts, $p_\theta(v_{i,0}=c\mid x_t).$
The reverse posterior is
\begin{equation} \begin{aligned}
p_\theta(v_{i,t-1}=a\mid v_{i,t}=b,x_t)
=
\sum_c
q_V(v_{i,t-1}=a\mid v_{i,t}=b,v_{i,0}=c)\,
p_\theta(v_{i,0}=c\mid x_t).
\end{aligned} \end{equation} 
Using the forward kernels,
\begin{equation} \begin{aligned}
q_V(v_{i,t-1}=a\mid v_{i,t}=b,v_{i,0}=c)
=
\frac{
[Q_t^V]_{ab}\,[\bar Q_{t-1}^V]_{ca}
}{
[\bar Q_t^V]_{cb}
}.
\end{aligned} \end{equation} 
Therefore,
\begin{equation} \begin{aligned}
p_\theta(v_{i,t-1}=a\mid v_{i,t}=b,x_t)
=
\sum_c
\frac{
[Q_t^V]_{ab}\,[\bar Q_{t-1}^V]_{ca}
}{
[\bar Q_t^V]_{cb}
}
p_\theta(v_{i,0}=c\mid x_t).
\end{aligned} \end{equation} 

\paragraph{Edge reverse posterior.}
For each directed edge variable \(e_{ij}\in\{0,1\}\), the model predicts, $p_\theta(e_{ij,0}=c\mid V_0,x_t).$
The reverse posterior is
\begin{equation} \begin{aligned}
p_\theta(e_{ij,t-1}=a\mid e_{ij,t}=b,V_0,x_t)
=
\sum_c
q_E(e_{ij,t-1}=a\mid e_{ij,t}=b,e_{ij,0}=c)
p_\theta(e_{ij,0}=c\mid V_0,x_t),
\end{aligned} \end{equation} 
where
\begin{equation} \begin{aligned}
q_E(e_{ij,t-1}=a\mid e_{ij,t}=b,e_{ij,0}=c)
=
\frac{
[Q_t^E]_{ab}\,[\bar Q_{t-1}^E]_{ca}
}{
[\bar Q_t^E]_{cb}
}.
\end{aligned} \end{equation} 

Thus,
\begin{equation} \begin{aligned}
p_\theta(e_{ij,t-1}=a\mid e_{ij,t}=b,V_0,x_t)
=
\sum_c
\frac{
[Q_t^E]_{ab}\,[\bar Q_{t-1}^E]_{ca}
}{
[\bar Q_t^E]_{cb}
}
p_\theta(e_{ij,0}=c\mid V_0,x_t).
\end{aligned} \end{equation}

\paragraph{Edge-gated relation reverse posterior.}
Relations are only meaningful on active directed edges. Since the forward
relation corruption is conditioned on the current edge state \(e_{ij,t}\), the
reverse relation posterior must distinguish three cases depending on
\((e_{ij,t-1}, e_{ij,t})\).

Let the denoiser output clean relation logits after edge and object predictions,
\begin{equation} \begin{aligned}
\epsilon_\theta^R(x_t,t,V_{t-1},E_{t-1})_{ij}
\in \mathbb{R}^{K_{\mathrm{rel}}},
\end{aligned} \end{equation}
and the predicted clean relation distribution on active edges is,
\begin{equation} \begin{aligned}
p_\theta(r_{ij,0}=c \mid x_t,V_{t-1},E_{t-1})
=
\mathrm{softmax}
\left(
\epsilon_\theta^R(x_t,t,V_{t-1},E_{t-1})_{ij}
\right)_c.
\end{aligned} \end{equation}

\noindent\textbf{Case 1: Edge absent at \(t-1\).}
If $e_{ij,t-1}=0,$
then the relation is deterministically null:
\begin{equation} \begin{aligned}
p_\theta(r_{ij,t-1}=0\mid e_{ij,t-1}=0,x_t)=1.
\end{aligned} \end{equation}

\noindent\textbf{Case 2: Edge active at both \(t-1\) and \(t\).}
If $e_{ij,t-1}=1,\quad e_{ij,t}=1,$
then relation corruption was active in the forward process. Therefore we can use
the standard discrete posterior over positive relation labels:
\begin{equation} \begin{aligned}
p_\theta(r_{ij,t-1}=a
\mid r_{ij,t}=b,e_{ij,t-1}=1,e_{ij,t}=1,x_t)
&=
\sum_c
q_R(r_{ij,t-1}=a\mid r_{ij,t}=b,r_{ij,0}=c)\\
&p_\theta(r_{ij,0}=c\mid x_t,V_{t-1},E_{t-1}),
\end{aligned} \end{equation}
where
\begin{equation} \begin{aligned}
q_R(r_{ij,t-1}=a\mid r_{ij,t}=b,r_{ij,0}=c)
=
\frac{
[Q_t^R]_{ab}\,[\bar Q_{t-1}^R]_{ca}
}{
[\bar Q_t^R]_{cb}
}.
\end{aligned} \end{equation}
Equivalently,
\begin{equation} \begin{aligned}
p_\theta(r_{ij,t-1}=a
\mid r_{ij,t}=b,e_{ij,t-1}=1,e_{ij,t}=1,x_t)
&\propto
q_R(r_{ij,t}=b\mid r_{ij,t-1}=a)\\
&\sum_c
p_\theta(r_{ij,0}=c\mid x_t,V_{t-1},E_{t-1})
q_R(r_{ij,t-1}=a\mid r_{ij,0}=c).
\end{aligned} \end{equation}

\noindent\textbf{Case 3: Edge reactivates in the reverse step.}
The key sampler modification for the factorized setting is in addressing this case. If $e_{ij,t-1}=1,\quad e_{ij,t}=0,$
then the current relation \(r_{ij,t}=0\) is caused deterministically by the
absence of the current edge. Hence \(r_{ij,t}\) carries no semantic information
about the positive relation at \(t-1\). We therefore use the predictive marginal
obtained by propagating the denoiser's clean relation prediction to timestep
\(t-1\):
\begin{equation} \begin{aligned}
p_\theta(r_{ij,t-1}=a
\mid e_{ij,t-1}=1,e_{ij,t}=0,x_t)
=
\sum_c
q_R(r_{ij,t-1}=a\mid r_{ij,0}=c)
p_\theta(r_{ij,0}=c\mid x_t,V_{t-1},E_{t-1}).
\end{aligned} \end{equation}
Using the relation marginal transition matrix,
\begin{equation} \begin{aligned}
q_R(r_{ij,t-1}=a\mid r_{ij,0}=c)
=
[\bar Q_{t-1}^R]_{ca},
\end{aligned} \end{equation}
so
\begin{equation} \begin{aligned}
p_\theta(r_{ij,t-1}=a
\mid e_{ij,t-1}=1,e_{ij,t}=0,x_t)
=
\sum_c
[\bar Q_{t-1}^R]_{ca}
p_\theta(r_{ij,0}=c\mid x_t,V_{t-1},E_{t-1}).
\end{aligned} \end{equation}

Thus the full edge-gated relation sampler is
\begin{equation} \begin{aligned}
p_\theta(r_{ij,t-1}\mid r_{ij,t},e_{ij,t-1},e_{ij,t},x_t)
=
\begin{cases}
\delta_0(r_{ij,t-1}),
& e_{ij,t-1}=0,\\[6pt]
p_\theta^{\mathrm{post}}(r_{ij,t-1}\mid r_{ij,t},x_t),
& e_{ij,t-1}=1,\ e_{ij,t}=1,\\[6pt]
p_\theta^{\mathrm{marg}}(r_{ij,t-1}\mid x_t),
& e_{ij,t-1}=1,\ e_{ij,t}=0.
\end{cases}
\end{aligned} \label{eqn:three_cases}\end{equation}

\paragraph{Structured reverse transition.}
Combining the object, edge, and relation posterior updates gives the structured reverse transition,
\begin{equation} \begin{aligned}
p_\theta(x_{t-1}\mid x_t)
=
p_\theta(V_{t-1}\mid x_t)\,
p_\theta(E_{t-1}\mid V_{t-1},x_t)\,
p_\theta(R_{t-1}\mid V_{t-1},E_{t-1},x_t).
\end{aligned} \end{equation} 

\begin{equation} \begin{aligned}
p_\theta(x_{t-1}\mid x_t)
=
\prod_i p_\theta(v_{i,t-1}\mid v_{i,t},x_t)
\prod_{i\neq j}p_\theta(e_{ij,t-1}\mid e_{ij,t},V_{t-1},x_t)
\prod_{i\neq j}p_\theta(r_{ij,t-1}\mid r_{ij,t},e_{ij,t-1},V_{t-1},E_{t-1},x_t).
\end{aligned} \label{eqn:posterior} \end{equation} 

The final relation factor in Eqn. \ref{eqn:posterior} is edge-gated (Eqn. \ref{eqn:three_cases}).

\textbf{Proposition.}
Validity preservation of the reverse sampler:
  
If the reverse sampler uses the edge-gated relation posterior above, then every sampled state \(x_{t-1}\) satisfies
\begin{equation} \begin{aligned}
e_{ij,t-1}=0 \Rightarrow r_{ij,t-1}=0.
\end{aligned} \end{equation} 

For any directed pair \((i,j)\), suppose $e_{ij,t-1}=0.$
By definition of the edge-gated relation posterior,
\begin{equation} \begin{aligned}
p_\theta(r_{ij,t-1}=0\mid e_{ij,t-1}=0,x_t)=1.
\end{aligned} \end{equation} 
Therefore the only relation value with nonzero probability is \(r_{ij,t-1}=0\). Hence
\begin{equation} \begin{aligned}
e_{ij,t-1}=0 \Rightarrow r_{ij,t-1}=0.
\end{aligned} \end{equation} 
Thus the sampled state remains in the constrained scene graph space.

\paragraph{Sampling Process.}
At each reverse timestep \(t\), we perform:
\begin{enumerate}
    \item Predict object clean-state probabilities:
    \begin{equation} \begin{aligned}
    p_\theta(V_0\mid x_t).
    \end{aligned} \end{equation} 
    Sample or decode
    \begin{equation} \begin{aligned}
    V_{t-1}\sim p_\theta(V_{t-1}\mid x_t)
    \end{aligned} \end{equation} 
    using the object posterior.

    \item Predict edge clean-state probabilities conditioned on recovered objects:
    \begin{equation} \begin{aligned}
    p_\theta(E_0\mid V_{t-1},x_t).
    \end{aligned} \end{equation} 
    Sample
    \begin{equation} \begin{aligned}
    E_{t-1}\sim p_\theta(E_{t-1}\mid V_{t-1},x_t)
    \end{aligned} \end{equation} 
    using the edge posterior.

    \item Predict relation clean-state probabilities conditioned on recovered objects and edges:
    \begin{equation} \begin{aligned}
    p_\theta(R_0\mid V_{t-1},E_{t-1},x_t).
    \end{aligned} \end{equation} 
    For each pair \((i,j)\):
    \begin{equation} \begin{aligned}
    r_{ij,t-1}=
    \begin{cases}
    0, & e_{ij,t-1}=0,\\
    \text{sample from active relation posterior}, & e_{ij,t-1}=1, e_{ij,t}=1.\\
    \text{sample from relation marginal} & e_{ij,t-1}=1, e_{ij,t}=0
    \end{cases}
    \end{aligned} \end{equation} 
\end{enumerate}

Thus, the reverse sampler follows the semantic order
\begin{equation} \begin{aligned}
V_{t-1} \rightarrow E_{t-1} \rightarrow R_{t-1},
\end{aligned} \end{equation} 
and relation labels are sampled only on active directed edges.

\subsubsection{Structure-Aware Sampler Refinements}
\label{app:sampler_refinements}

The factorized reverse sampler defined in Sections~\ref{app:rev_sample}, \ref{sec:rev_sampler} produces samples by approximating the reverse diffusion distribution,
\begin{equation} \begin{aligned}
p_\theta(x_{t-1} \mid x_t),
\quad x_t = (V_t, E_t, R_t^+).
\end{aligned} \end{equation}
While this sampler respects the structural constraints of scene graphs, it may still produce locally inconsistent or semantically implausible configurations due to approximation errors in the learned model.
To address this, we introduce a family of \emph{structure-aware refinement kernels} that can be applied at inference time. These refinements operate as additional transition kernels that modify intermediate samples while preserving the constrained scene-graph state space.

Let the original, unrefined reverse sampling be, $y_{t-1} \sim p_\theta(x_{t-1} \mid x_t)$.
A refinement step applies a transition kernel
\begin{equation} \begin{aligned}
x_{t-1} \sim K_t(\cdot \mid y_{t-1}, x_t),
\end{aligned} \end{equation}
The resulting modified sampler is,
\begin{equation} \begin{aligned}
\tilde p_\theta(x_{t-1} \mid x_t)
=
\sum_{y} K_t(x_{t-1} \mid y, x_t)\, p_\theta(y \mid x_t).
\end{aligned} \end{equation}
We design refinement kernels that exploit the structured factorization of scene graphs to improve semantic consistency and long-tail relation modeling. All refinement kernels are constructed to preserve the constraint, $e_{ij} = 0 \Rightarrow r_{ij} = 0.$

$\blacksquare$ \textbf{Split Gibbs Refinement}

Following the success of split Gibbs samplers \cite{Vono_2019,chu2026split}, we consider a entity-wise Gibbs-style refinement over the factorized state, $x = (V, E, R^+).$ Let \(S \subseteq \{V, E, R^+\}\) be a subset of entities, and denote its complement by \(x_{\neg S}\). A Gibbs update replaces the selected subset by sampling from the conditional distribution over $k$ steps,
\begin{equation} \begin{aligned}
x_S^{(k+1)} \sim p_\theta(x_S \mid x_{\neg S}^{(k)}, x_t).
\end{aligned} \end{equation}

This defines the corresponding refinement kernel,
\begin{equation} \begin{aligned}
K_{\mathrm{Gibbs}}(x' \mid x)
=
p_\theta(x_S' \mid x_{\neg S}, x_t)\,
\mathbf{1}[x'_{\neg S} = x_{\neg S}].
\end{aligned} \end{equation}

For scene graphs, the choices are, $S \in \{V, E, R^+\},$
yielding updates such as
\begin{equation} \begin{aligned}
R^+ \sim p_\theta(R^+ \mid V, E, x_t),
\quad
E \sim p_\theta(E \mid V, R^+, x_t),
\quad
V \sim p_\theta(V \mid E, R^+, x_t).
\end{aligned} \end{equation}

In particular, relation resampling
\begin{equation} \begin{aligned}
r_{ij} \sim p_\theta(r_{ij} \mid v_i, v_j, e_{ij}=1, x_t)
\end{aligned} \end{equation}
directly enforces object–relation compatibility. Gibbs refinement corresponds to coordinate ascent algorithm \cite{wright2015coordinatedescentalgorithms}, $x_S^{(k+1)} = \arg\max_{x_S} p_\theta(x_S \mid x_{\neg S}^{(k)}, x_t).$
Thus, Gibbs iteratively moves the sample toward a local fixed point of the model’s conditional distributions:
\begin{equation} \begin{aligned}
x_S \approx \arg\max p_\theta(x_S \mid x_{\neg S}, x_t).
\end{aligned} \end{equation}

$\blacksquare$ \textbf{Soft Mask Refinement}

While Gibbs refinement updates entire entities, it may unnecessarily modify already correct entities and, therefore, might be too aggressive. To address this, we introduce a selective resampling mechanism based on model uncertainty.

Let \(M\) denote a subset of low-confidence entities. We temporarily mask them, $x_M \leftarrow \mathtt{MASK}$ while keeping the remaining entities fixed. The masked entities are then resampled:
\begin{equation} \begin{aligned}
x_M' \sim p_\theta(x_M \mid x_{\neg M}, x_M = \mathtt{MASK}, x_t).
\end{aligned} \end{equation}

This defines the mask refinement kernel,
\begin{equation} \begin{aligned}
K_{\mathrm{mask}}(x' \mid x)
=
p_\theta(x_M' \mid x_{\neg M}, \operatorname{mask}(x_M), x_t)\,
\mathbf{1}[x'_{\neg M} = x_{\neg M}].
\end{aligned} \end{equation}

To determine the entities to mask, confidence is derived from the model’s predictions. For a entitiy \(z\), we can define confidence based on entropy,
\begin{equation} \begin{aligned}
\lambda(z) = -\sum_c p_\theta(z = c \mid x_t)\log p_\theta(z = c \mid x_t).
\end{aligned} \end{equation}
We select, $M = \{ z : \lambda(z) < \tau \}$
or the top-\(k\) most uncertain entities.

For relations, masking is applied only on active edges:
\begin{equation} \begin{aligned}
M_R = \{(i,j): e_{ij}=1,\  \lambda(r_{ij}) < \tau_R\}.
\end{aligned} \end{equation}

Soft mask refinement, therefore, focuses on resampling uncertain entities, improving local consistency while preserving reliable structure.

$\blacksquare$ \textbf{Rare-Relation Refinement}

Finally, we introduce a refinement mechanism targeting long-tailed relation distributions.
Standard relation sampling uses, $p_\theta(r_{ij} \mid V, E, x_t),$
which tends to favor frequent relations. To counteract this, we define a tilted distribution,
\begin{equation}
    \begin{aligned}
\tilde p_\theta(r_{ij} \mid V, E, x_t)
\propto
p_\theta(r_{ij} \mid V, E, x_t)
\exp(\beta_{\mathrm{rare}}\, s(r_{ij})),        
    \end{aligned}
    \label{eqn:rare-tilt}
\end{equation}

where \(s(r)\) is a rarity score, $s(r) = -\log p_{\mathrm{data}}(r)$.


Unlike Gibbs refinement, which follows the scene graph factorization \(V \rightarrow E \rightarrow R\), rare-relation refinement inverts the dependency structure locally, $R \rightarrow E \rightarrow V.$, i.e., we sample rare relations from the rarety tilted distribution in Eqn. \ref{eqn:rare-tilt}, force edges, and then revise objects based on whether they match the rare relations sampled.
For a subset \(S_R\) of selected edges, we perform:
\begin{align*}
R_{S_R}' &\sim \tilde p_\theta(R_{S_R} \mid V, E, x_t), \\
E' &\sim p_\theta(E \mid V, R', x_t), \\
V' &\sim p_\theta(V \mid E', R', x_t).
\end{align*}

This defines the kernel
\begin{equation} \begin{aligned}
K_{\mathrm{rare}}(x' \mid x)
=
\tilde p_\theta(R_{S_R}' \mid V, E, x_t)\,
p_\theta(E' \mid V, R', x_t)\,
p_\theta(V' \mid E', R', x_t).
\end{aligned} \end{equation}

This reversed conditioning allows rare relations to induce compatible objects and edges.

$\blacksquare$ \textbf{Combined Refinement Strategy}

At each reverse timestep \(t\), we apply a composition of refinement kernels (or a subset of them):
\begin{equation} \begin{aligned}
x_{t-1}
\sim
K_{\mathrm{mask},t}
\circ
K_{\mathrm{rare},t}
\circ
K_{\mathrm{Gibbs},t}
\left( p_\theta(x_{t-1} \mid x_t) \right).
\end{aligned} \end{equation}

Refinements are applied selectively over different phases of the reverse process:
\begin{itemize}
    \item Gibbs refinement in early-to-mid timesteps to improve structural coherence,
    \item rare-relation tilting in mid timesteps to enhance long-tail relation generation (when needed).
    \item soft mask refinement in late timesteps to correct low-confidence entities.
\end{itemize}

They provide a flexible and structure-aware mechanism for improving sample quality without modifying the underlying diffusion model.

\subsection{Text-Conditioned Scene Graph Generation via Reward Tilting -- Additional Details}
\label{app:smc}
\paragraph{Sequential Monte Carlo (SMC) approximation.}
We approximate sampling from text-conditioned SG distribution, \(\tilde p(G_{0:T} \mid T)\) using Sequential Monte Carlo with \(D\) particles. At timestep \(t\), we maintain a set of $D$ particles, $\{G_t^{(d)}\}_{k=1}^D.$

\textbf{1. Particle Propagation step:}
\begin{equation} \begin{aligned}
G_{t-1}^{(d)} \sim p_\theta(G_{t-1} \mid G_t^{(d)}).
\end{aligned} \end{equation}

\textbf{2. Weight computation step:}
For each particle, we compute the predicted clean graph  and to distribute the reward across timesteps, incremental weights are used, $w_t=\exp\big(\beta (R_{t-1} - R_t)\big),$
\begin{equation} \begin{aligned}
\hat G_0^{(d)} = \epsilon_\theta(G_{t-1}^{(d)}, t-1),
\end{aligned} \end{equation}
\begin{equation} \begin{aligned}
w_{t-1}^{(d)}
=
\exp\big(\beta (R(\hat G_0^{(d)}, T) - R_t^{(d)})\big),
\end{aligned} \end{equation}
where \(R_t^{(d)} = R(\hat G_0^{(d)}(t), T)\) is the previous reward and $T$ is the text prompt (not to be confused with time $T$). Followed by normalization,
\begin{equation} \begin{aligned}
\tilde w_{t-1}^{(d)}
=
\frac{w_{t-1}^{(d)}}{\sum_{j=1}^D w_{t-1}^{(j)}}.
\end{aligned} \end{equation}

\textbf{3. Resampling Step:}
$\{G_{t-1}^{(d)}\}_{d=1}^D$ are the propagated particles with normalized weights
$\{\tilde w_{t-1}^{(d)}\}_{d=1}^D$, where $\sum_{d=1}^D \tilde w_{t-1}^{(d)} = 1$.
We perform resampling by drawing particle indices from a categorical distribution based on reward alignment,
\begin{equation} \begin{aligned}
I^{(d)} \sim \mathrm{Categorical}\big(\tilde w_{t-1}^{(1)}, \dots, \tilde w_{t-1}^{(D)}\big),
\end{aligned} \end{equation}
and setting
\begin{equation} \begin{aligned}
G_{t-1}^{(d)} \leftarrow G_{t-1}^{(I^{(d)})}.
\end{aligned} \end{equation}

Equivalently, this step samples particles from the empirical distribution, thereby replicating high-weight particles (high text similarity) and discarding low-weight ones.
\begin{equation} \begin{aligned}
\hat p(G_{t-1})
=
\sum_{d=1}^D \tilde w_{t-1}^{(d)}\, \delta_{G_{t-1}^{(d)}},
\end{aligned} \end{equation}
\paragraph{Reward function.}
\label{app:clip}
The reward \(R(G,T)\) measures compatibility between a scene graph \(G\) and a text prompt \(T\).
defined as,
\begin{equation} \begin{aligned}
R(G,T)
=
\cos\big(f_{\mathrm{CLIP}}(T),\ f_{\mathrm{CLIP}}(\mathrm{serialize}(G))\big),
\end{aligned} \end{equation}
where \(f_{\mathrm{CLIP}}\) is a pretrained CLIP encoder \cite{radford2021learning} and \(\mathrm{serialize}(G)\) converts the scene graph into a textual description.
Under the SMC procedure, the resulting samples approximate:
\begin{equation} \begin{aligned}
\tilde p(G_0 \mid T)
\propto
p_\theta(G_0)\,\exp\big(\beta R(G_0, T)\big),
\end{aligned} \end{equation}
thus enabling text-conditioned scene graph generation without retraining the diffusion model.

\subsection{Experimental Details}
\label{app:experiment}

\subsubsection{Metrics}

$\blacksquare$ \textbf{Graph Metrics.}

A scene graph is defined as $G=(V,E,R)$, where $V=\{v_i\}_{i=1}^{N_V}$ are node labels with $v_i \in \mathcal{V}$, $E=\{e_{ij}\}_{i,j=1}^{N_E}$ is binary edge existence with $e_{ij} \in \{0,1\}$, and $R=\{r_{ij}\}_{i,j=1}^{N_R}$ are relation labels with $r_{ij} \in \mathcal{R}$ and $r_{ij}=0$ indicating no relation. Let $p_{\text{data}}$ denote the empirical distribution of real graphs and $p_\theta$ denote the distribution of generated graphs.

\paragraph{Maximum Mean Discrepancy (MMD).}
We evaluate distributional similarity using Maximum Mean Discrepancy (MMD) over graph statistics \cite{you2018graphrnn,liao2019gran}. Given a feature map $\phi(G)$ and a positive-definite kernel (RBF) $k(\cdot,\cdot)$, MMD is defined as,
\begin{equation}
\mathrm{MMD}^2(\phi)
=
\mathbb{E}_{G,G' \sim p_{\text{data}}}[k(\phi(G), \phi(G'))]
+
\mathbb{E}_{\hat{G},\hat{G}' \sim p_\theta}[k(\phi(\hat{G}), \phi(\hat{G}'))]
-
2\mathbb{E}_{G \sim p_{\text{data}}, \hat{G} \sim p_\theta}[k(\phi(G), \phi(\hat{G}))].
\end{equation}

We instantiate $\phi(G)$ using different graph statistics:
\begin{itemize}
\item \textbf{Node MMD (N-MMD):} $\phi_V(G)$ is the histogram of node labels $v_i$.
\item \textbf{Relation MMD (R-MMD):} $\phi_R(G)$ is the histogram of relation labels over active edges ($e_{ij}=1$).
\item \textbf{In-degree MMD (ID-MMD):} $\phi_{\text{in}}(G)$ is the histogram of node in-degrees $d_i^{\text{in}} = \sum_j e_{ji}$.
\item \textbf{Out-degree MMD (OD-MMD):} $\phi_{\text{out}}(G)$ is the histogram of node out-degrees $d_i^{\text{out}} = \sum_j e_{ij}$.
\end{itemize}

\paragraph{Triplet-TV.}
We measure the discrepancy between distributions over <\texttt{object,relation,subject}> triplets. Considering only active edges ($e_{ij}=1$),
\begin{equation}
p(v_i, r, v_j)
=
\mathbb{P}(v_i, r_{ij}=r, v_j \mid e_{ij}=1).
\end{equation}

\begin{equation}
\mathrm{TV}_{\text{triplet}}
=
\frac{1}{2}
\sum_{v_i \in \mathcal{V}} \sum_{r \in \mathcal{R}} \sum_{v_j \in \mathcal{V}}
\left|
p_\theta(v_i, r, v_j)
-
p_{\text{data}}(v_i, r, v_j)
\right|.
\end{equation}

\paragraph{Attach-TV.}
We introduce a new metric, Attach-TV, which evaluates semantic attachment patterns given objects, by comparing relation distributions conditioned on object pairs. For active edges ($e_{ij}=1$),
\begin{equation}
p(r \mid v_i, v_j)
=
\mathbb{P}(r_{ij}=r \mid v_i, v_j, e_{ij}=1).
\end{equation}

\begin{equation}
\mathrm{TV}_{\text{attach}}
=
\frac{1}{2}
\sum_{v_i \in \mathcal{V}} \sum_{v_j \in \mathcal{V}}
p_{\text{data}}(v_i, v_j)
\sum_{r \in \mathcal{R}}
\left|
p_\theta(r \mid v_i, v_j)
-
p_{\text{data}}(r \mid v_i, v_j)
\right|.
\end{equation}

\paragraph{Tail-weighted Conditional TV (RARE-K-TV).}
To evaluate long-tailed relation modeling, we consider the distribution of relation labels conditioned on active edges, $p(r \mid e=1).$
We define a tail-weighted total variation distance,
\begin{equation}
\mathrm{TV}_{\text{tail}}
=
\frac{1}{2}
\sum_{r \in \mathcal{R}}
w_r
\left|
p_\theta(r \mid e=1)
-
p_{\text{data}}(r \mid e=1)
\right|.
\end{equation}
The weights emphasize rare predicates:
\begin{equation}
w_r \propto \frac{1}{p_{\text{data}}(r \mid e=1)^\alpha}, \quad \alpha \in [0.5,1],
\end{equation}
with normalization $\sum_r w_r = 1$.

$\blacksquare$ \textbf{Layout Metrics.}

From \cite{xu2024joint}, for a generated layout, we calculate the F1 score between this generated layout and every ground-truth layout. Assume there are $K$ object categories. Given a pair of generated and ground-truth layouts, the F1 score is calculated as $\text{F1} = \sum_{k \in K} w_k \text{F1}_k$, where $\text{F1}_k$ is the F1 score for that class $k$ and weight $w_k$. Calculating $\text{F1}_k$ is based on IoU metric. We use $10$ different bounding box IoU thresholds ranging from 0.05 to 0.5 with a step size of 0.05, $\text{F1}_k = \frac{1}{10}\sum_{iou \in [0.05:0.05:0.5]}\text{F1}(iou\mid k)$, where $\text{F1}(iou\mid k)$ means F1 score between two layouts given a specific object category $k$ and a IoU threshold $iou$. We calculate 4 different types of F1 scores: (1) F1-Vanilla (F1-V), where $w_k = \frac{1}{\mid K \mid}$ for every object category; (2) F1-Area (F1-A), where $w_k = \frac{\text{Area}(k)}{\sum_{k \in K}\text{Area}(k)}$ and $\text{Area}(k)$ is the average bounding box area in the validation set for the object category $k$; (3) F1-Frequency (F1-F), where$w_k = \frac{\text{Freq}(k)}{\sum_{k \in K}\text{Freq}(k)}$ and
$\text{Freq}(k)$ is the frequency of the object category $k$ in the validation set; (4) F1-Box, where the F1 calculation is purely based on the bounding box locations, that is, we treat all bounding boxes as having a single object category ($|K| = 1$ and $w_k = 1$). 

We compute results on $300$ data points from the CompSGBench dataset for Table \ref{tab:image_metrics_comp} as these metrics require expensive LLM inference to analyze the image and extract SG, objects, and relations.

\subsubsection{Implementation Details}
\label{app:training}
\paragraph{Denoising network.}
We use DiGress \cite{vignac2023digress} denoiser built on Graph transformer as the backbone but unlike DiGress we do not inject any global features. At timestep \(t\), the input graph state is
\[
x_t=(V_t,E_t,R_t),
\]
where \(V_t\) denotes object labels, \(E_t\), directed edge existence, and \(R_t\), relation labels. Given our factorized setup, we embed each entity separately,
\[
h_i^{V,0}=\mathrm{Emb}_V(v_{i,t})+\mathrm{Emb}_t(t),
\]
\[
h_{ij}^{E,0}=\mathrm{Emb}_E(e_{ij,t})+\mathrm{Emb}_t(t),
\]
\[
h_{ij}^{R,0}=
\begin{cases}
\mathrm{Emb}_R(r_{ij,t})+\mathrm{Emb}_t(t), & e_{ij,t}=1,\\
\mathbf{0}, & e_{ij,t}=0.
\end{cases}
\]
The edge-relation pair representation is initialized as,
\[
h_{ij}^{0}=\phi_E(h_{ij}^{E,0},h_{ij}^{R,0}),
\]
and the graph transformer updates node and pair edge-relation features jointly,
\[
(H^{V,L},H^{P,L})=\mathrm{GraphTransformer}_\theta(H^{V,0},H^{P,0},t).
\]
where $L = 1,\dots,\mathcal{L}$ is the transformer layers. 

\paragraph{Structured prediction heads.}
The factorized discrete space also requires modifying the prediction heads. 
The denoiser predicts clean graph entities with separate heads for objects and edges,
\[
p_\theta(v_{i,0}\mid x_t)=\mathrm{softmax}(W_V h_i^{V,L}),
\]
\[
p_\theta(e_{ij,0}\mid x_t)=\mathrm{softmax}(W_E h_{ij}^{P,L}),
\]

For predicting relations, we condition the relation head on the edge logits. Let $\ell_{ij}^E = W_E h_{ij}^{P,L}$ denote the edge logits, and let \(\phi(\cdot)\) be a learned projection of these logits. The relation distribution is defined as
\[
p_\theta(r_{ij,0} \mid x_t)
=
\mathrm{softmax}\!\left(
W_R \big[\, h_{ij}^{P,L},\ \phi(\ell_{ij}^E) \,\big]
\right).
\]

To enforce structural validity, we apply edge-gating at the output,
\[
p_\theta(r_{ij,0} \mid x_t)
=
\begin{cases}
\delta_0(r_{ij,0}), & e_{ij}=0,\\
\mathrm{softmax}\!\left(
W_R \big[\, h_{ij}^{P,L},\ \phi(\ell_{ij}^E) \,\big]
\right), & e_{ij}=1.
\end{cases}
\]

Thus, edge logits provide a soft conditioning signal for relation prediction during training, while hard gating ensures that no semantic relations are assigned to inactive edges.

\paragraph{Layout head.}
For generating grounded scene graphs and for downstream image generation, we optionally predict continuous layout bounding boxes from the final object-node representations using a layout head.
\[
\hat b_i = \sigma(W_B h_i^{V,L}) \in [0,1]^4.
\]

\paragraph{Training.}
For each training scene graph \(x_0=(V_0,E_0,R_0)\), we sample, $t\sim \mathrm{Uniform}\{1,\dots,T\}$
and construct, $x_t\sim q(x_t\mid x_0)$
using the structure-aware forward corruption process. The model is trained to reconstruct \(x_0\).
The base diffusion loss is cross-entropy loss between predictions $\{v,e,r\}_{ij,0}$ and clean entities $\{v,e,r\}^*_{ij,0}$,
\[
\mathcal{L}_{\mathrm{diff}}
=
\mathcal{L}_V+\lambda_E\mathcal{L}_E+\lambda_R\mathcal{L}_R.
\]

The object loss and directed edge loss are,
\[
\mathcal{L}_V
=
-\frac{1}{\sum_i m_i}
\sum_i m_i
\log p_\theta(v_{i,0} = v^*_{i,0}\mid x_t) \quad
\mathcal{L}_E
=
-\frac{1}{\sum_{i\neq j}M_{ij}}
\sum_{i\neq j}M_{ij}
\log p_\theta(e_{ij,0} = e^*_{ij,0}\mid x_t)
\]


The relation loss is computed only on active clean edges:
\[
\mathcal{L}_R
=
-\frac{1}{\sum_{i\neq j}M_{ij}e_{ij,0}}
\sum_{i\neq j}M_{ij}e_{ij,0}
w_{r_{ij,0}}
\log p_\theta(r_{ij,0} = r^*_{ij,0}\mid x_t).
\]

To handle the long-tailed relation distribution, we use relation class weights:
\[
w_r=\frac{1}{(p_{\mathrm{data}}(r)+\epsilon)^\alpha},
\qquad \alpha\in[0,1].
\]

For layout head supervision and performing grounding-aware training,
\[
\mathcal{L}_{\mathrm{box}}
=
\frac{1}{\sum_i m_i}
\sum_i m_i
\left[
\|\hat b_i-b_i\|_1
+
\lambda_{\mathrm{giou}}
\big(1-\mathrm{GIoU}(\hat b_i,b_i)\big)
\right].
\]

The total objective is $\mathcal{L}
=
\mathcal{L}_{\mathrm{diff}}
+
\lambda_{\mathrm{box}}\mathcal{L}_{\mathrm{box}}.$

Optionally, we also include intermediate reverse-step supervision: 
\begin{equation} \begin{aligned}
\mathcal{L}_{\mathrm{rev}}
=
\lambda_{\mathrm{rev},V}\mathcal{L}_{\mathrm{rev},V}
+
\lambda_{\mathrm{rev},E}\mathcal{L}_{\mathrm{rev},E}
+
\lambda_{\mathrm{rev},R}\mathcal{L}_{\mathrm{rev},R},
\end{aligned} 
\label{eqn:training_loss}
\end{equation} 
where the target is \(x_{t-1}\) sampled from the known forward process. Finally,
\[
\mathcal{L}
=
\mathcal{L}_{\mathrm{diff}}
+
\lambda_{\mathrm{box}}\mathcal{L}_{\mathrm{box}}
+
\lambda_{\mathrm{rev}}\mathcal{L}_{\mathrm{rev}}.
\]

\paragraph{Optimization.}
We optimize the model with AdamW, LR \(10^{-4}\), weight decay \(10^{-2}\), gradient clipping, and an exponential moving average of model parameters for sampling. We use the training data distribution as the prior for the forward noising process instead of a uniform distribution. The training is performed on a single NVIDIA A6000 GPU (48GB). The number of diffusion steps for sampling is \(T=100\). The training and sampling scripts are available here\footnote{https://annonymous4565.github.io/annon/}.

\subsection{Ablation Studies}
\label{app:ablation}
In this section, we provide additional ablation studies. 

(1) \emph{Sampler Refinements:} We ablate over various sampler refinements discussed in Secs. \ref{sec:rev_sampler} \& \ref{app:sampler_refinements} and their schedules. Table \ref{tab:ablation_5} (on VG data) confirms our understanding that Gibbs refinement is beneficial at early timesteps but not too early as it may be too noisy to effect any improvements. Soft mask refinement is best as a last-mile refinement strategy to quickly resample uncertain predictions. Rare refinement, on the other hand, degrades graph performance at the expense of rare relation sampling. Thus, we use early Gibbs and late soft mask refinements as the final setting.

(2) \emph{Entity training loss scales:} We perform ablations on different per-entity loss scales i.e., $\lambda_{O,E,R}$ in the training loss (Eqn. \ref{eqn:training_loss}) on the VG dataset in Table \ref{tab:ablation_4}. Best performance is observed for $\lambda_O, \lambda_E, \lambda_R = (1,0.5,0.5)$.

(3) \emph{Rare-relation Class Weighting:} Table \ref{tab:ablation_2} ablates on the VG dataset over different class weighting strategies such as, (1) simple $w_k = 1$ (2) Effective num, $w_k = \frac{1-\beta}{1-\beta^{f_k}}$, $\beta \in (0,1)$ and (3) inverse frequency, $w_k = \frac{1}{f_k}$, where $k$ is a relation class and $f_k$ is class frequency. The effective num approach performs the best, as it is based on diminishing emphasis as class count increases, while inverse frequency is too aggressive.

\begin{table}[H]
\centering
\caption{Ablation study over different sampling refinement schedules (start - $T_{\text{Gibbs}}, T_{\text{mask}}, T_{\text{rare}}$). Each refinement is applied for $\Delta = 10$ timesteps.}
\small
\begin{tabular}{@{}ccccccccc@{}}
\toprule
$T_{\text{Gibbs}}$ & $T_{\text{mask}}$ & $T_{\text{rare}}$ & N-MMD & R-MMD & ID-MMD & OD-MMD & TRIP-TV & RARE-K-TV \\
\midrule
10  & -  & -  & 0.0048 & 0.0058 & 0.0043 & 0.0047 & 0.5367 & 0.5310 \\
25  & -  & -  & 0.0046 & 0.0058 & 0.0043 & 0.0047 & 0.5288 & 0.5218 \\
50  & -  & -  & 0.0055 & 0.0063 & 0.0054 & 0.0053 & 0.6182 & 0.5536 \\
-   & 50 & -  & 0.0053 & 0.0058 & 0.0044 & 0.0048 & 0.5560 & 0.5514 \\
-   & 75 & -  & 0.0051 & 0.0060 & 0.0045 & 0.0048 & 0.5548 & 0.5466 \\
-   & 90 & -  & 0.0051 & 0.0060 & 0.0044 & 0.0047 & 0.5479 & 0.5347 \\
25  & 90 & 50 & 0.0175 & 0.0424 & 0.0053 & 0.0044 & 0.7235 & 0.2627 \\
\textbf{25}  & \textbf{90} & \textbf{-}  & \textbf{0.0046} & \textbf{0.0058} & \textbf{0.0043} & \textbf{0.0046} & \textbf{0.4920} & \textbf{0.5194} \\
\bottomrule
\end{tabular}
\label{tab:ablation_5}
\end{table}

\begin{table}[H]
\centering
\caption{Ablation study on different per-entity ($V,R,E)$ loss scales.}
\small
\begin{tabular}{@{}lccccccc@{}}
\toprule
Node loss & Relation loss & Edge loss & N-MMD & R-MMD & ID-MMD & OD-MMD & TRIP-TV \\
\midrule
1   & 1   & 1   & 0.0012 & 0.0091 & 0.0064 & 0.0060 & 0.5882 \\
0.5 & 1   & 1   & 0.0045 & 0.0090 & 0.0065 & 0.0060 & 0.5950 \\
1   & 0.5 & 1   & 0.0011 & 0.0095 & 0.0068 & 0.0062 & 0.5789 \\
1   & 1   & 0.5 & 0.0011 & 0.0096 & 0.0068 & 0.0062 & 0.5802 \\
0.5 & 0.5 & 1   & 0.0012 & 0.0095 & 0.0068 & 0.0062 & 0.5940 \\
\textbf{1} & \textbf{0.5} & \textbf{0.5} & \textbf{0.0046} & \textbf{0.0058} & \textbf{0.0043} & \textbf{0.0046} & \textbf{0.4920} \\
0.5 & 1   & 0.5 & 0.0012 & 0.0095 & 0.0065 & 0.0061 & 0.5906 \\
\bottomrule
\end{tabular}
\label{tab:ablation_4}
\end{table}

\begin{table}[H]
\centering
\caption{Ablation study on different relation weighting strategy for long-tail relations.}
\small
\begin{tabular}{@{}lcccccc@{}}
\toprule
Rel-K class weight & N-MMD & R-MMD & ID-MMD & OD-MMD & TRIP-TV & RARE-K-TV\\
\midrule
Simple        & 0.0024 & 0.0828 & 0.0155 & 0.0155 & 0.7422 & 0.5802\\
Inverse freq & 0.0088 & 0.0105 & 0.0091 & 0.0093 & 0.5257 & 0.5437\\
\textbf{Effective\_num}   & \textbf{0.0046} & \textbf{0.0058} & \textbf{0.0043} & \textbf{0.0046} & \textbf{0.4920} & \textbf{0.5194}\\
\bottomrule
\end{tabular}
\label{tab:ablation_2}
\end{table}


\subsection{Scene Graph Generation -- Qualitative results}

\subsubsection{Sampled Scene Graphs}
Figures \ref{fig:uncond_vg}, \ref{fig:uncond_vg_1} are graphs sampled from {\name} trained on Visual Genome data. Figure \ref{fig:uncond_coco} is correspondingly for COCO, and Figure \ref{fig:uncond_comp} shows graphs sampled from {\name} trained on LAION-SG data.

\begin{figure*}[h]
    \centering
    \includegraphics[width=1\columnwidth]{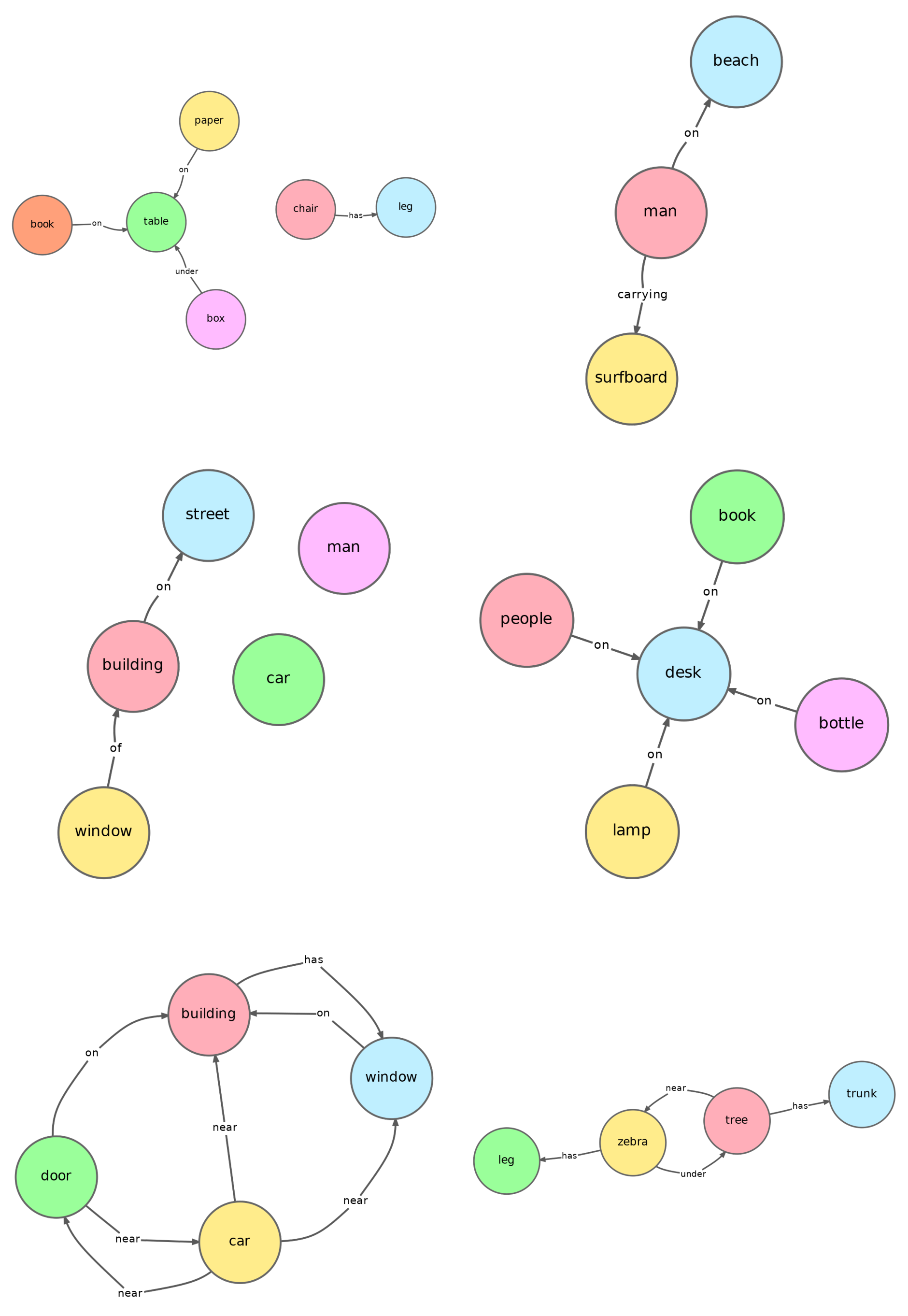}
    \caption{Qualitative Results: Scene graphs sampled from {\name} trained on Visual Genome \cite{krishna2016visualgenomeconnectinglanguage}.}
    \label{fig:uncond_vg}
    \vspace{-1.7em}
\end{figure*}

\begin{figure*}[h]
    \centering
    \includegraphics[width=1\columnwidth]{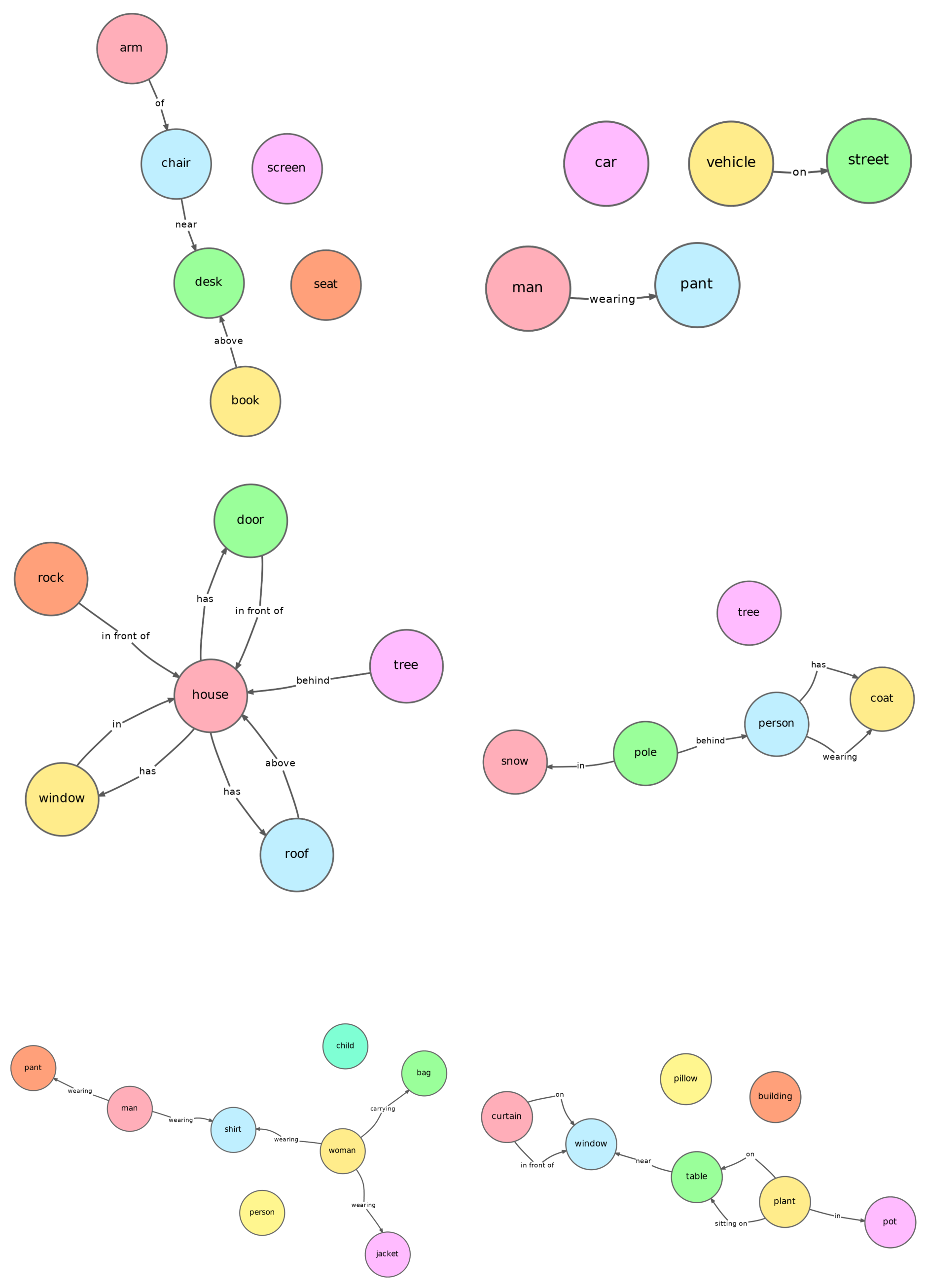}
    \caption{Qualitative Results: Scene graphs sampled from {\name} trained on Visual Genome.}
    \label{fig:uncond_vg_1}
    \vspace{-1.7em}
\end{figure*}

\begin{figure*}[h]
    \centering
    \includegraphics[width=1\columnwidth]{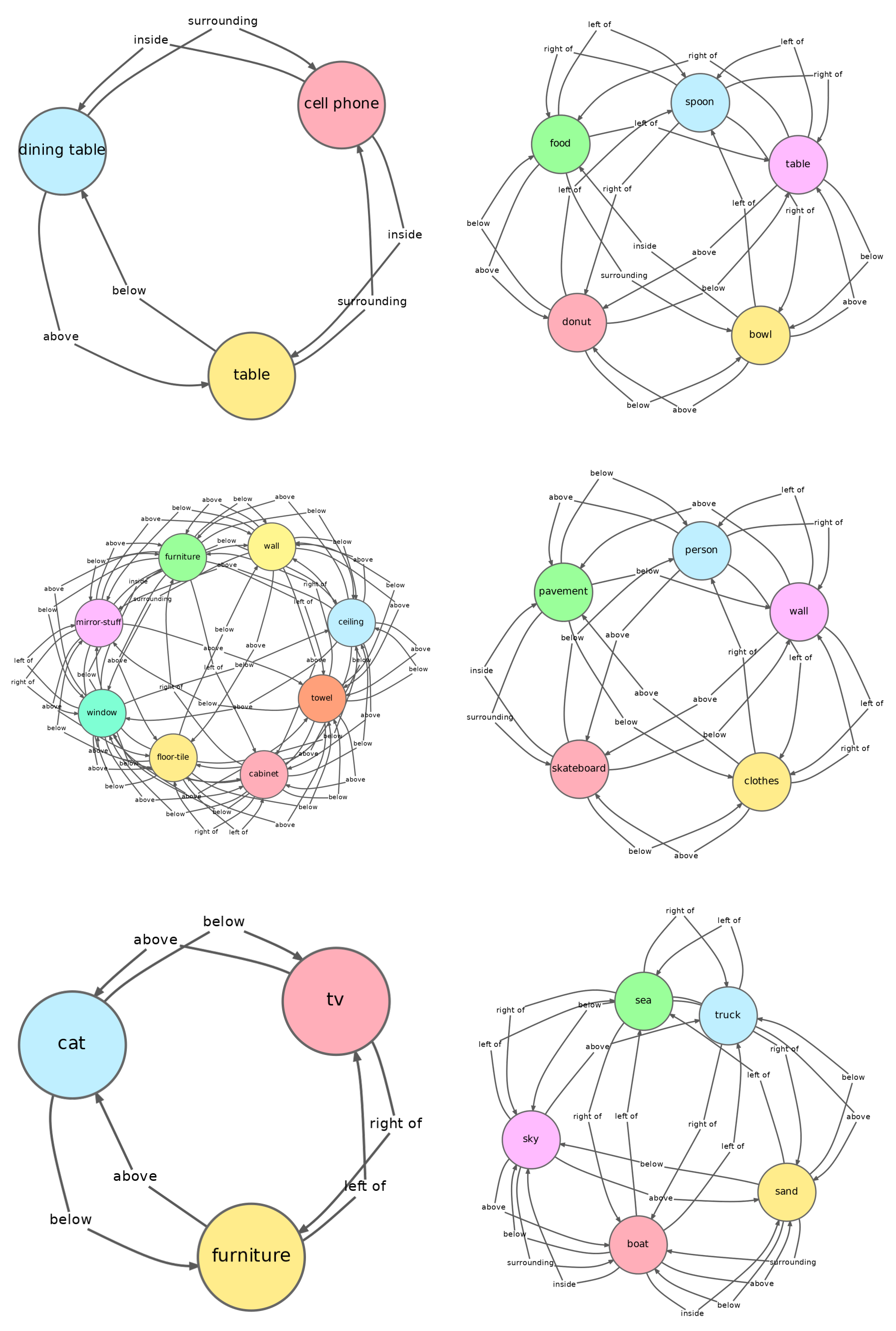}
    \caption{Qualitative Results: Scene graphs sampled from {\name} trained on COCO dataset \cite{lin2015microsoftcococommonobjects}. \textit{Note: The COCO dataset represents scenes with symmetric edges.}}
    \label{fig:uncond_coco}
    \vspace{-1.7em}
\end{figure*}

\begin{figure*}[h]
    \centering
    \includegraphics[width=1\columnwidth]{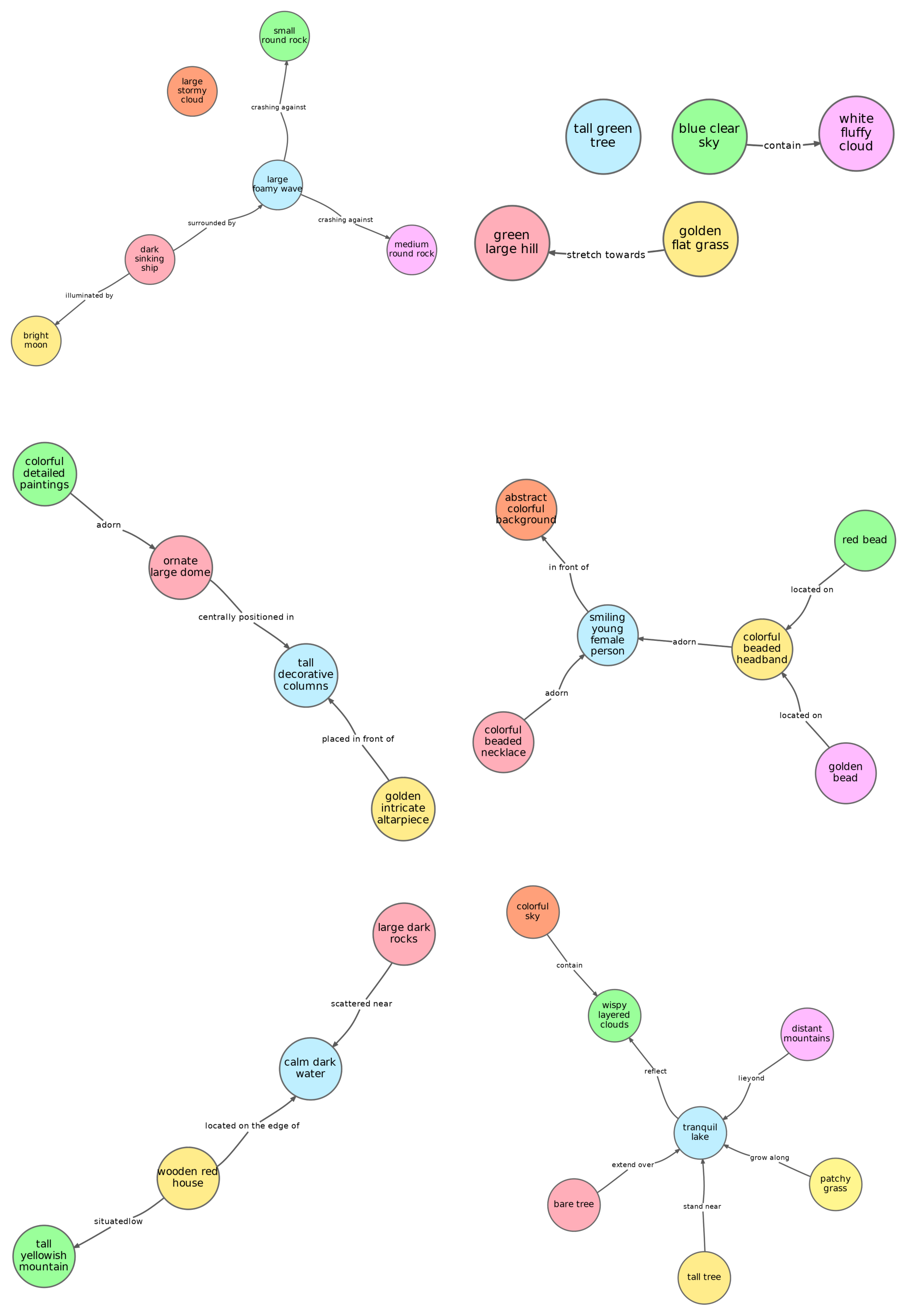}
    \caption{Qualitative Results: Scene graphs sampled from {\name} trained on LAION-SG \cite{li2024laionsgenhancedlargescaledataset}.}
    \label{fig:uncond_comp}
    \vspace{-1.7em}
\end{figure*}

\subsubsection{Graph Completion Tasks}
Figures \ref{fig:comple_vg}, \ref{fig:comple_vg_1} illustrate qualitative results for the single object and single relation completion task. Likewise, Figures \ref{fig:comple_comp} and \ref{fig:comple_comp_1} illustrate completion examples on CompSGBench benchmark data.

\begin{figure*}[h]
    \centering
    \includegraphics[width=1\columnwidth]{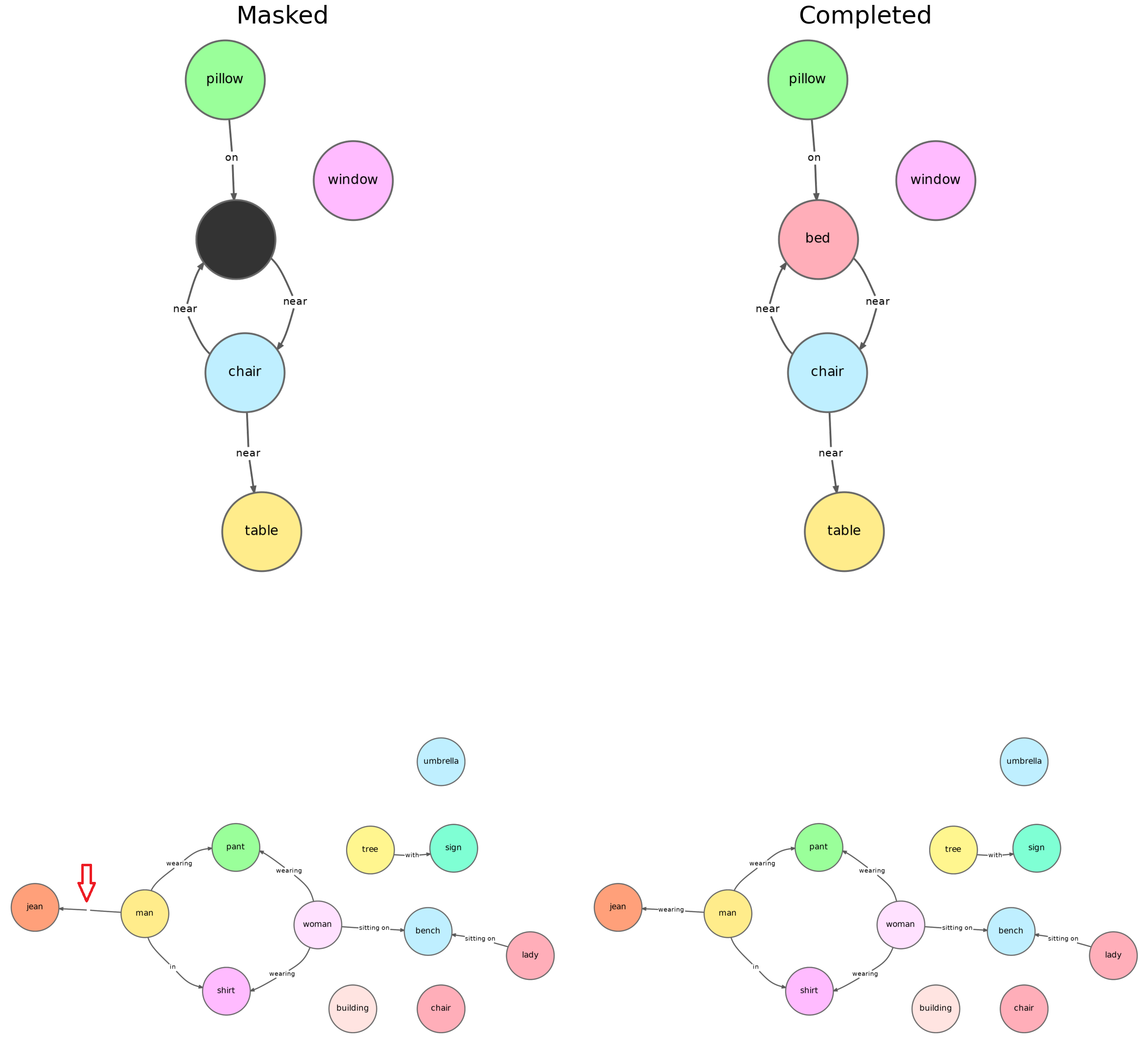}
    \caption{Qualitative Results: {\name} scene graph single object/relation completion task on Visual Genome. Masked objects are in black, and masked relations are indicated by red arrows.}
    \label{fig:comple_vg}
    \vspace{-1.7em}
\end{figure*}

\begin{figure*}[h]
    \centering
    \includegraphics[width=1\columnwidth]{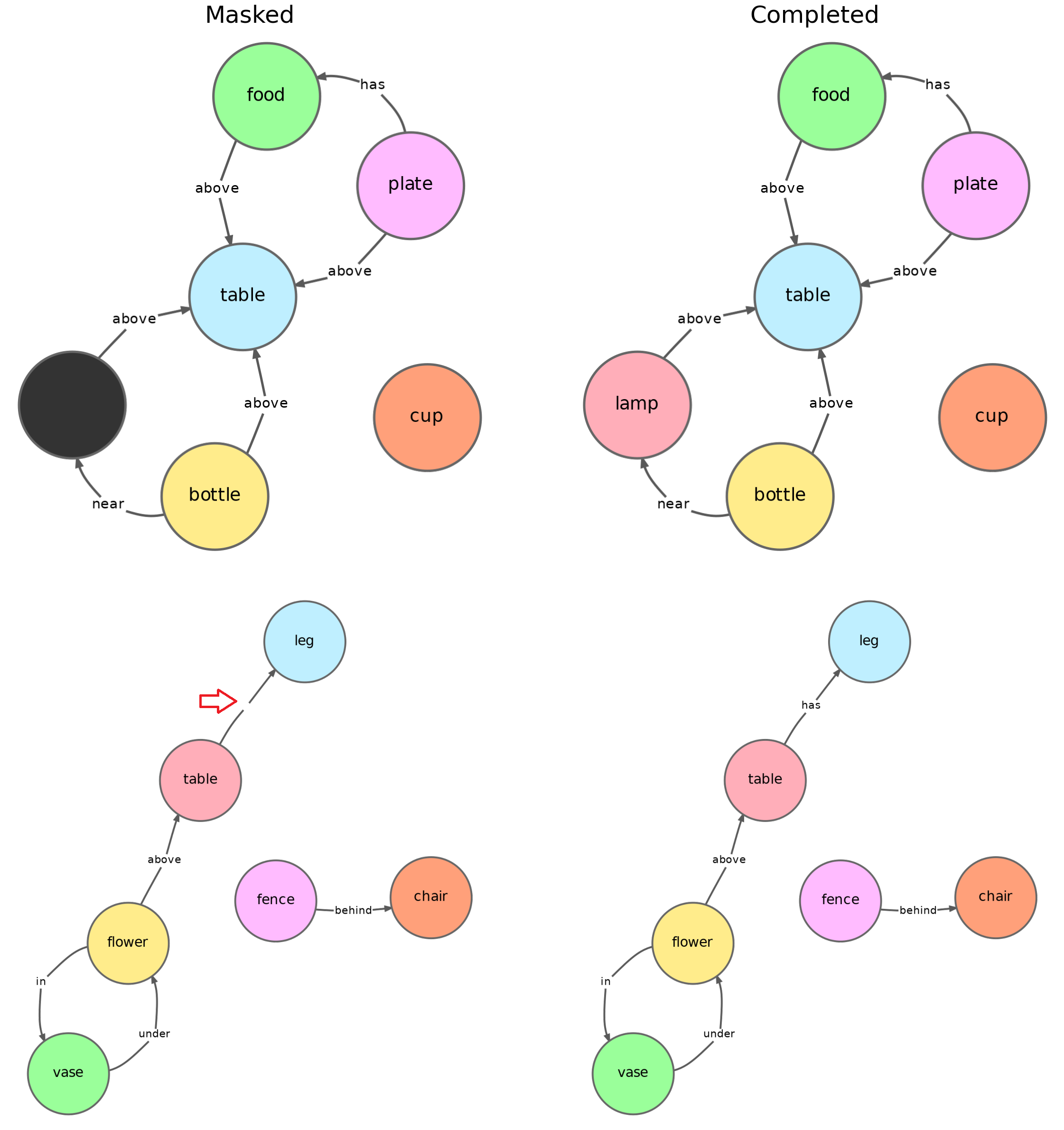}
    \caption{Qualitative Results: {\name} scene graph single object/relation completion task on Visual Genome dataset. Masked objects are in black, and masked relations are indicated by red arrows.}
    \label{fig:comple_vg_1}
    \vspace{-1.7em}
\end{figure*}

\begin{figure*}[h]
    \centering
    \includegraphics[width=1\columnwidth]{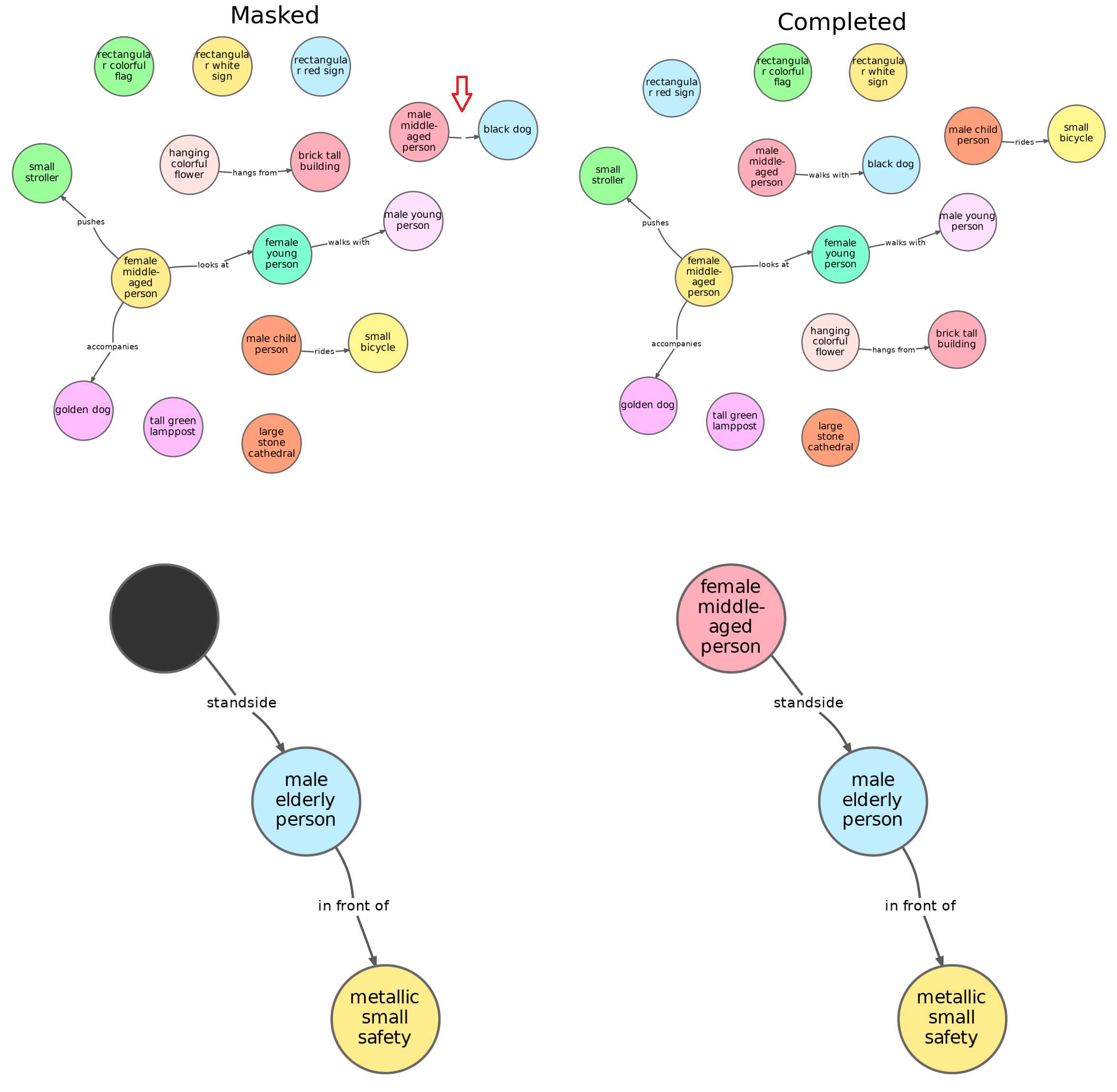}
    \caption{Qualitative Results: {\name} scene graph single object/relation completion task on CompSGBench \cite{li2024laionsgenhancedlargescaledataset}. Masked objects are in black, and masked relations are indicated by red arrows.}
    \label{fig:comple_comp}
    \vspace{-1.7em}
\end{figure*}

\begin{figure*}[h]
    \centering
    \includegraphics[width=1\columnwidth]{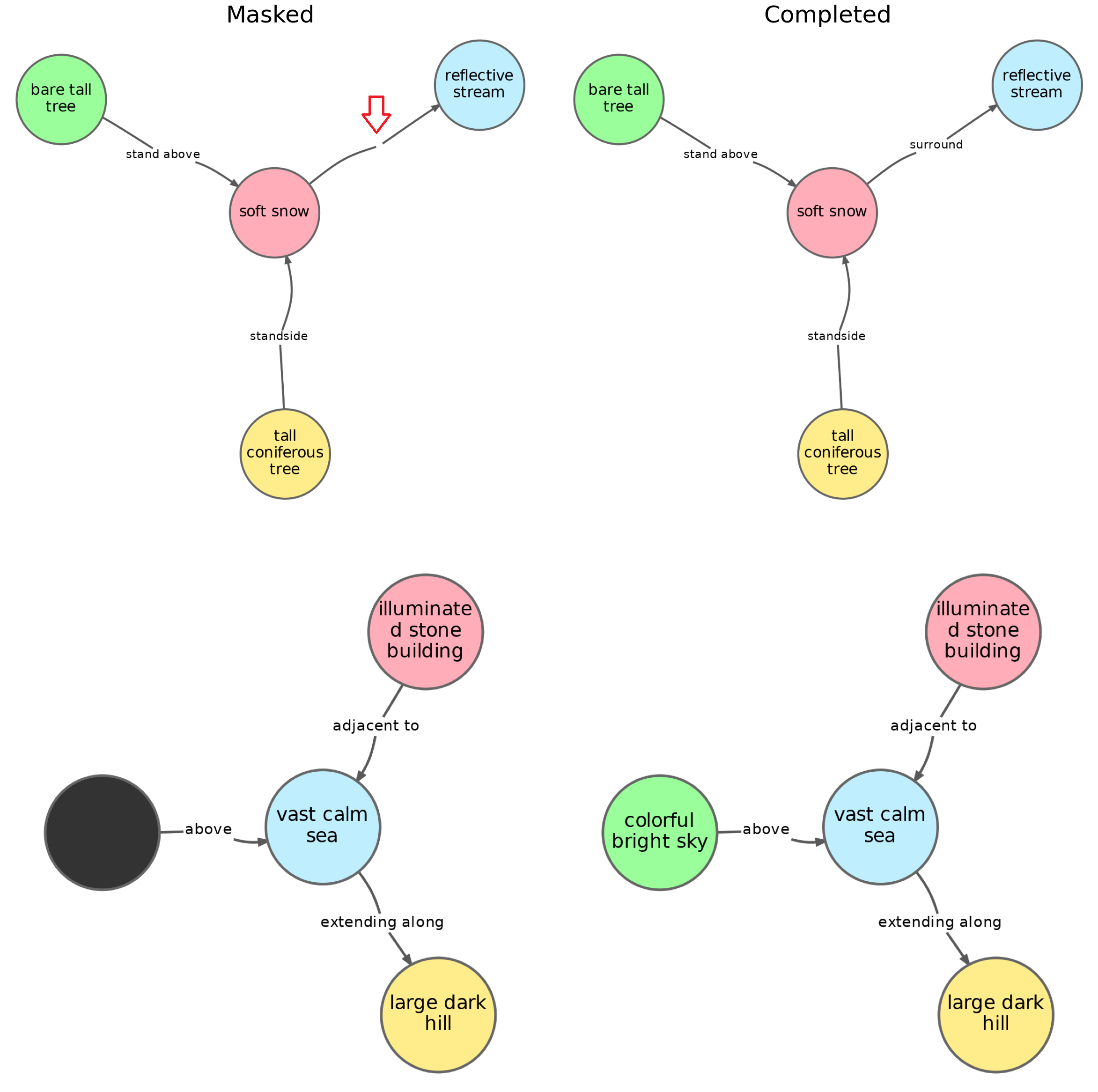}
    \caption{Qualitative Results: {\name} scene graph single object/relation completion task on CompSGBench. Masked objects are in dark gray, and masked relations are indicated with red arrows.}
    \label{fig:comple_comp_1}
    \vspace{-1.7em}
\end{figure*}

\subsection{Text-Image Generation -- Qualitative Results}
\subsubsection{Text-to-SG-to-Image Generation}
\label{app:t2i_qual_res}
Figure \ref{fig:t2i_comp_sg} shows the generated text-conditioned scene graphs, which were then used for graph-conditioned image generation in Figure \ref{fig:t2i_comp} in the main text.


Figures \ref{fig:t2i_comp_1}, \ref{fig:t2i_comp_2} illustrate additional qualitative examples of text-to-image generation, comparing text-based composition methods: SDXL, ComposedDiffusion, CO3, and ours, {\name}. Figures  \ref{fig:t2i_comp_sg_1}, \ref{fig:t2i_comp_sg_2} show the corresponding text-conditioned scene graphs sampled from {\name}, which were in turn used to generate the graph-conditioned images.

\begin{figure*}[t]
    \centering
    \includegraphics[width=1\columnwidth]{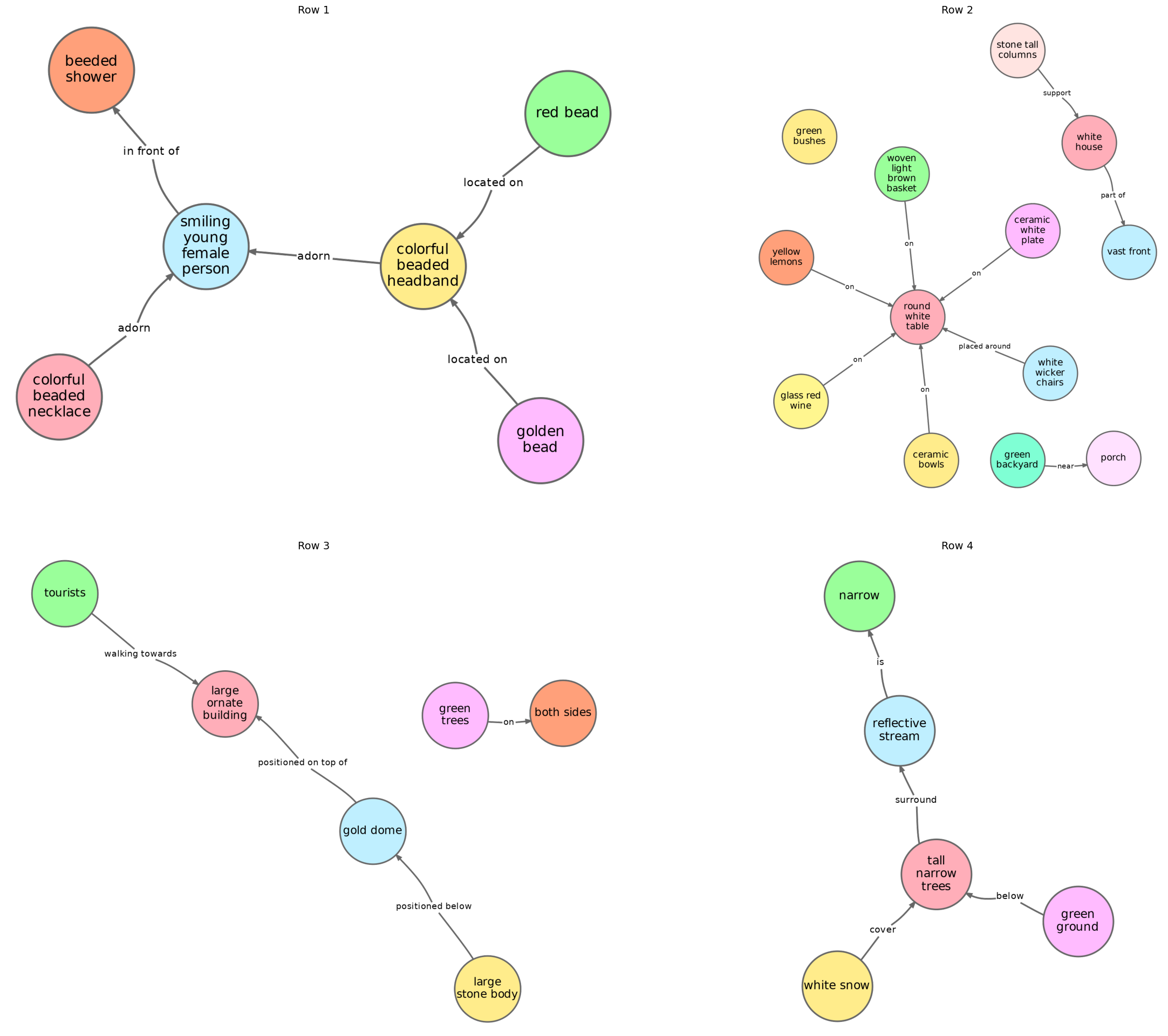}
    \caption{Qualitative Results: Sampled text-conditioned scene graphs used for generating image results of {\name} in Figure \ref{fig:t2i_comp} (Main text).}
    \label{fig:t2i_comp_sg}
    \vspace{-1.7em}
\end{figure*}

\begin{figure*}[t]
    \centering
    \includegraphics[width=0.8\columnwidth]{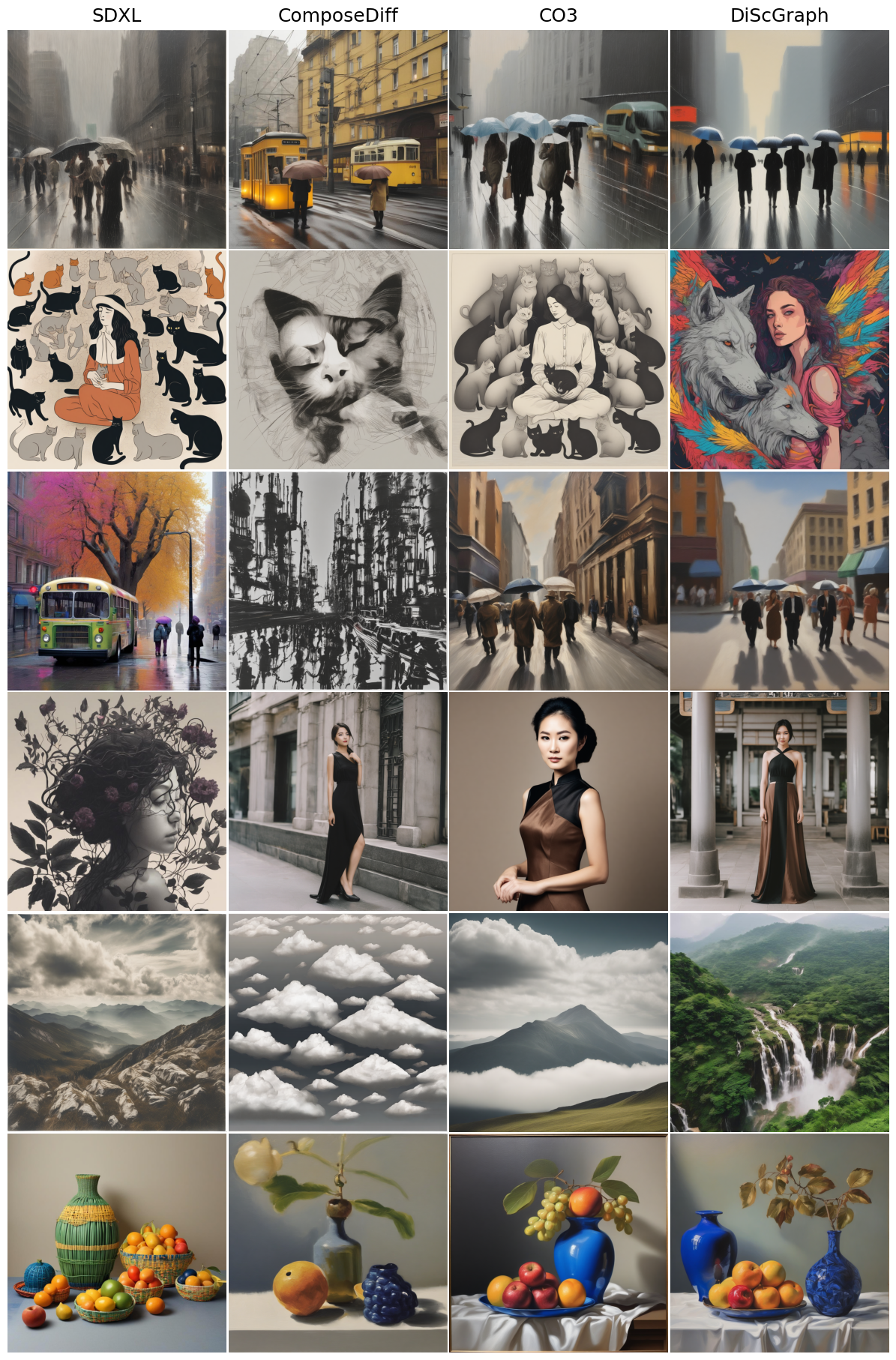}
    \caption{Qualitative Results: T2I generation comparison of methods, SDXL, ComposeDiff, CO3, and {\name}. For prompts:\textit{  ``Five people are walking on a wet street with buildings. Two people are wearing skirts and carrying blue umbrellas while the others are wearing pants and long coats while carrying umbrellas. It is gloomy with some bright illumination from lights", ``a girl has wolves near her and she has colorful background around her with feathers of different colors.", ``Four people consisting of two men and women are walking down a street which is surrounded by buildings on sides. One man is old and another man is young. The women are mother and daughter. They are carrying blue umbrellas.'', ``An Asian woman in a black and gold dress is standing in front of a large stone archway and is posing for a camera.'',``a far away perspective shot of a mountainous area with lush green trees and a dramatic waterfall in the middle.", ``An oil painting of various yellow and orange fruits on a blue plate that is on a table with table cloth. Two blue vases are on the table.",}}
    \label{fig:t2i_comp_1}
    \vspace{-1.7em}
\end{figure*}

\begin{figure*}[t]
    \centering
    \includegraphics[width=1\columnwidth]{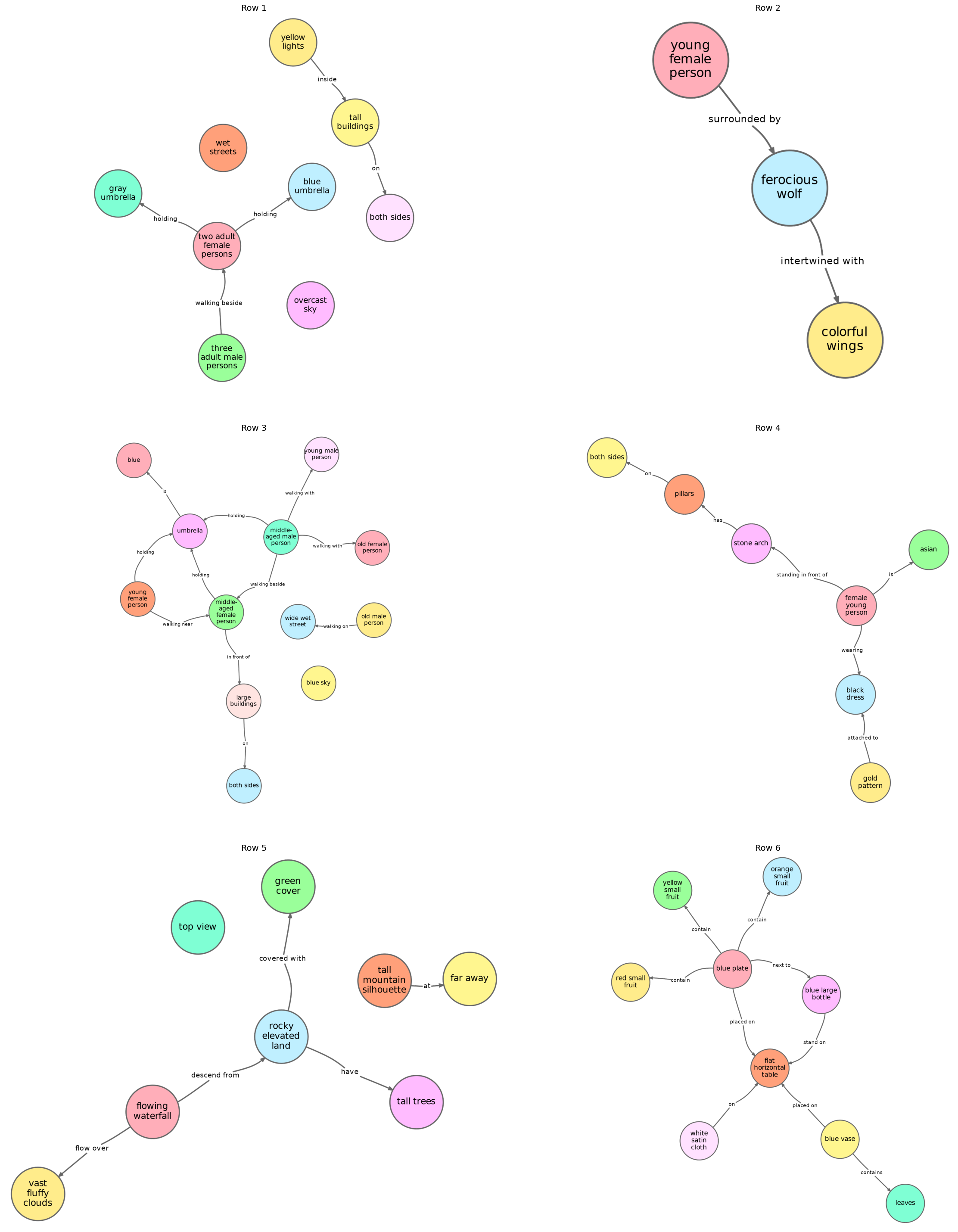}
    \caption{Qualitative Results: Sampled text-conditioned scene graphs used for generating image results of {\name} in Figure \ref{fig:t2i_comp_1}}
    \label{fig:t2i_comp_sg_1}
    \vspace{-1.7em}
\end{figure*}

\begin{figure*}[t]
    \centering
    \includegraphics[width=1\columnwidth]{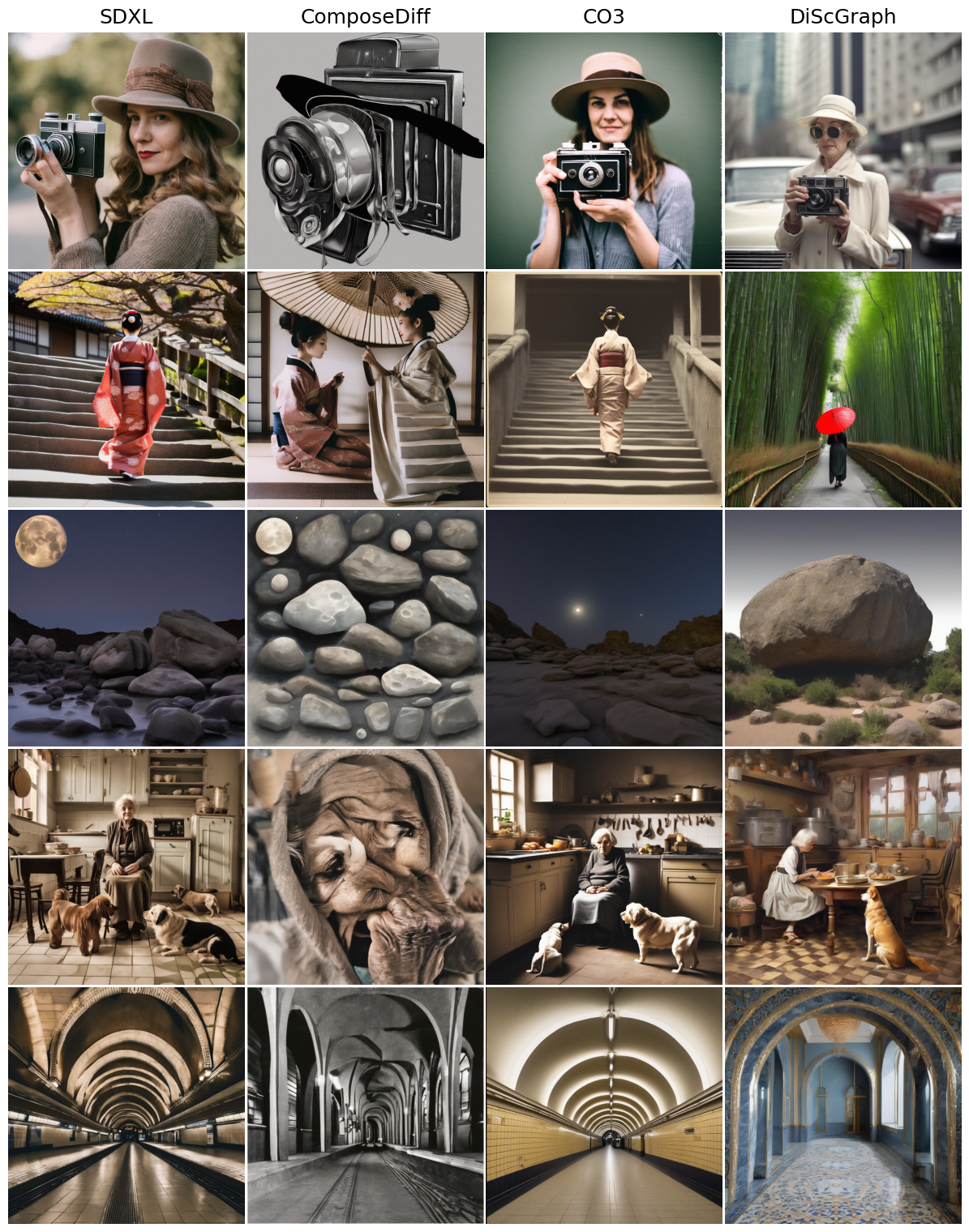}
    \caption{Qualitative Results: T2I generation comparison of methods, SDXL, ComposeDiff, CO3, and {\name}. For prompts:\textit{``An old woman in a white hat holding an old camera with a blurred car and buildings in the background.", ``A woman in traditional Japanese dress walking down a narrow path with trees. The woman's back is turned away.", ``A starry twilight sky and a rocky area with a boulder", ``An old kitchen that is worn out has large windows with frayed curtains on the right. The kitchen counter has many pots and pans, and cabinets above the stove. The kitchen floor has checkered pattern. A dog sits in front of a dining table and an old woman is sitting at the table cooking.", ``The interior of a cavernous building with many arches forming a pattern, golden and made of light blue ,marble."}}
    \label{fig:t2i_comp_2}
    \vspace{-1.7em}
\end{figure*}

\begin{figure*}[t]
    \centering
    \includegraphics[width=1\columnwidth]{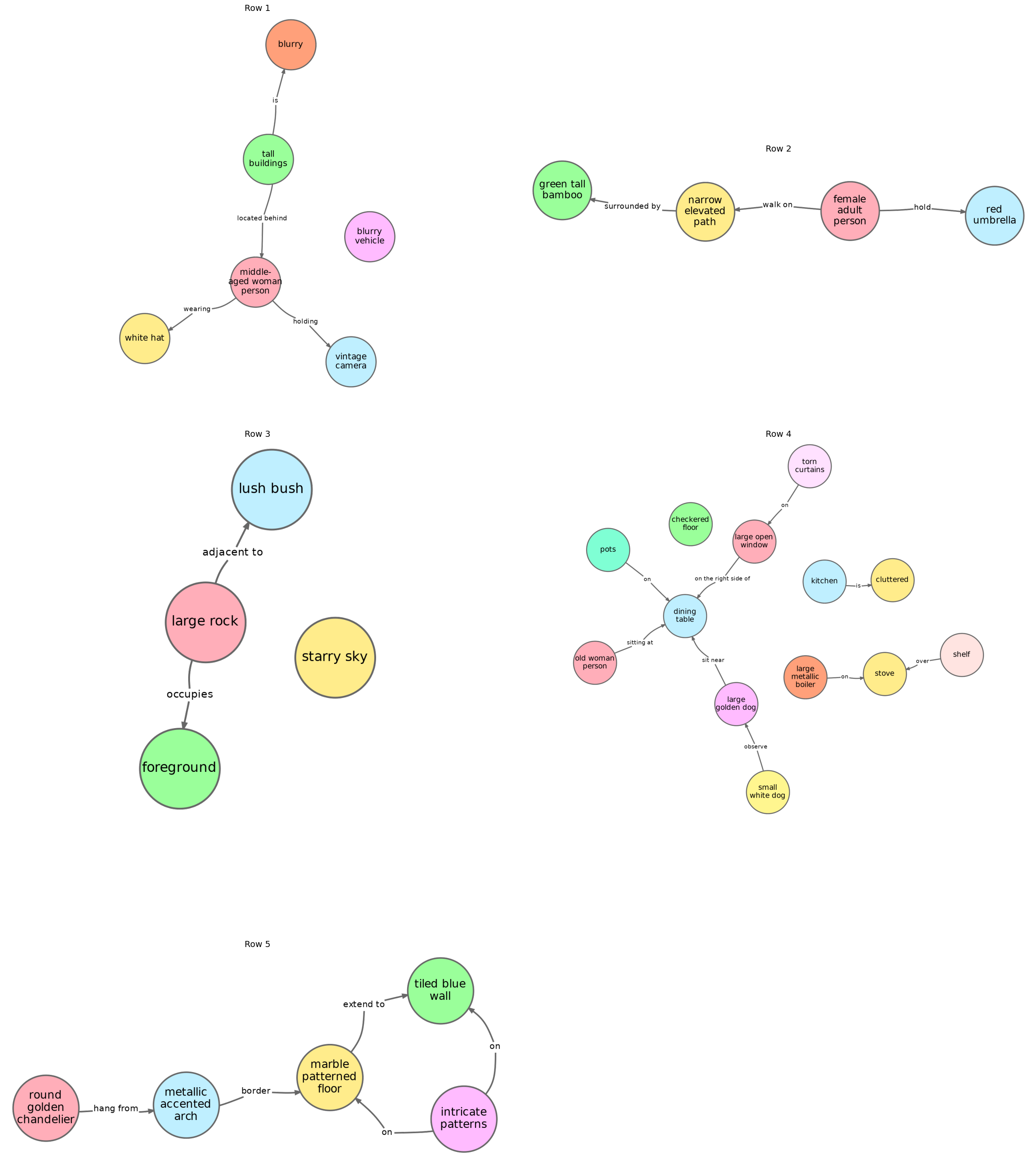}
    \caption{Qualitative Results: Sampled text-conditioned scene graphs used for generating image results of {\name} in Figure \ref{fig:t2i_comp_2}}
    \label{fig:t2i_comp_sg_2}
    \vspace{-1.7em}
\end{figure*}

\subsubsection{Text-to-SG-to-Layout to Image Generation}
We highlight {\name}'s capability to learn grounding signals from scene graphs in Figure \ref{fig:t2i_layout}. The text-conditioned scene graphs are converted to layouts by the layout head, and these layouts are used to generated images using GLIGEN \cite{li2023gligenopensetgroundedtexttoimage} model. {\name}'s derived layouts thus convert text into a plausible explanation of the visual scene and achieve performance comparable to layout generation methods such as LDM  \cite{lian2024llmgroundeddiffusionenhancingprompt}. LDM uses an LLM to infer layouts from text and uses the same downstream model GLIGEN to generate images. Thus, this result highlights that {\name} learns comparable grounding layouts derived from its good understanding of scene graphs. We specifically use prompts with multiple objects and attributes to highlight the performance.

\begin{figure*}[t]
    \centering
    \includegraphics[width=1\columnwidth]{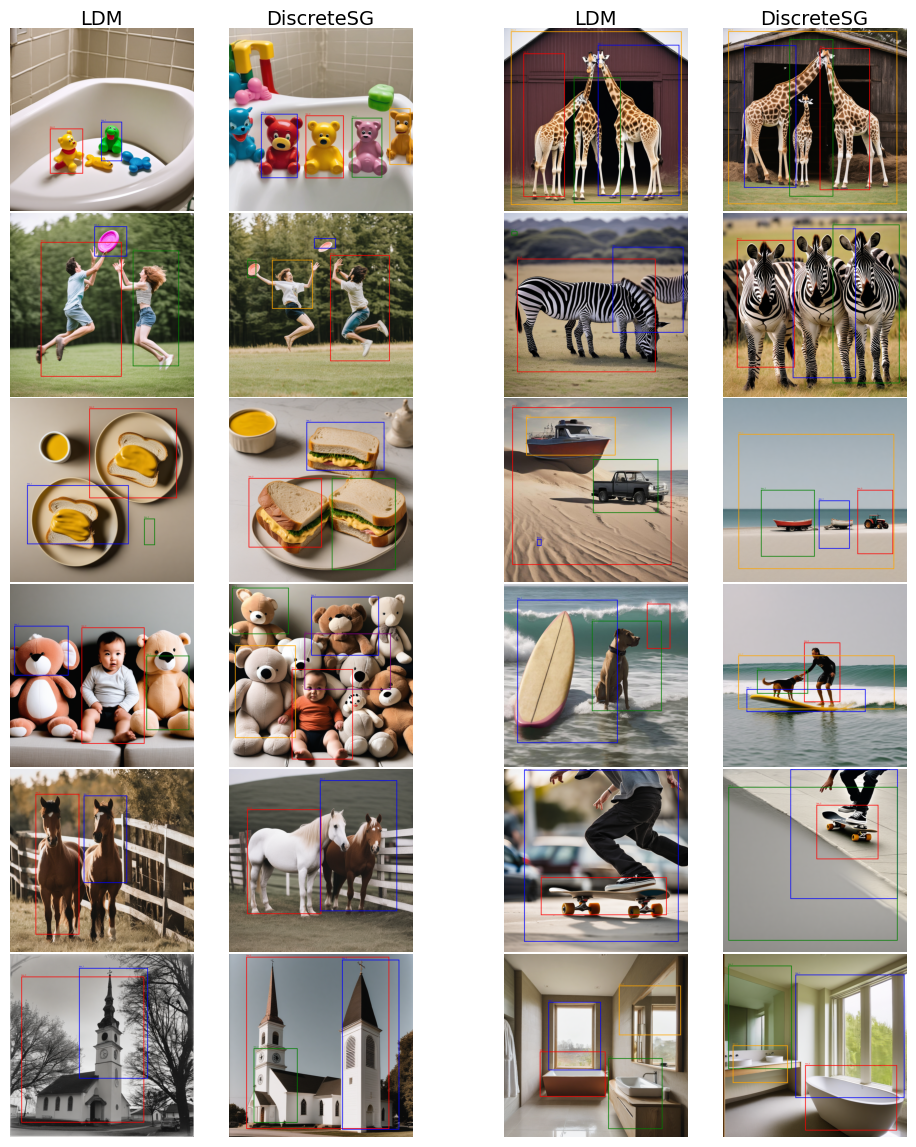}
    \caption{Qualitative Results:  Text -> SG -> Layout -> image results of LDM \cite{lian2024llmgroundeddiffusionenhancingprompt} and {\name}. For prompts:\textit{:``a bathtub with four plastic bear toys in bright colors", ``a man and a woman each throwing a frisbee while jumping in the air, ``three egg sandwiches on a white plate",``a baby surrounded by many bears", ``a white horse and a brown horse near a stable,``a church with a big turret and a clock tower surrounded by trees", ``three giraffes standing in front of a barn",``three zebras in grasslands",``a tractor and two boats on a white sand beach", ``a man and a dog on a surfboard riding a wave", ``a boy on a skateboard on a road", ``a bathroom with a big bath tub, a large window on the right, a big mirror above a sink."}}
    \label{fig:t2i_layout}
    \vspace{-1.7em}
\end{figure*}

\subsection{Broader Impacts}
 \label{sec:impact}
{\name} improves text-to-image generation by learning a structured scene graph prior using a factorized discrete diffusion model. At the same time, it introduces certain risks. It could be misused to create deceptive or misleading visuals, contributing to the spread of misinformation. When applied to depictions of public figures, it may compromise personal privacy. Moreover, the generated content can raise concerns around copyright and intellectual property.


\end{document}